\newtheorem{theorem}{Theorem}
\newtheorem{definition}{Definition}
\newtheorem{proposition}{Proposition}
\newtheorem{assumption}{Assumption}
\newtheorem{Def}{Definition}
\newtheorem{Assum}{Assumption}
\newtheorem{Theom}{Theorem}
\newtheorem{Prop}{Proposition}
\newtheorem{Fact}{Fact}
\newtheorem{Hypo}{Induction Hypothesis}
\newcommand{\rowbg}{gray!15}
\title{Mining In-distribution Attributes in Outliers for Out-of-distribution Detection}
\author{
    %Authors
    % All authors must be in the same font size and format.
    Yutian Lei, Luping Ji\footnote{Corresponding author.}, Pei Liu
    % ,
    % Written by AAAI Press Staff\textsuperscript{\rm 1}\thanks{With help from the AAAI Publications Committee.}\\
    % AAAI Style Contributions by Pater Patel Schneider,
    % Sunil Issar,\\
    % J. Scott Penberthy,
    % George Ferguson,
    % Hans Guesgen,
    % Francisco Cruz\equalcontrib,
    % Marc Pujol-Gonzalez\equalcontrib
}
\title{My Publication Title --- Single Author}
\author {
    Author Name
}
\title{My Publication Title --- Multiple Authors}
\author {
    % Authors
    First Author Name\textsuperscript{\rm 1,\rm 2},
    Second Author Name\textsuperscript{\rm 2},
    Third Author Name\textsuperscript{\rm 1}
}
\begin{document}

\maketitle

\begin{abstract}
Out-of-distribution (OOD) detection is indispensable for deploying reliable machine learning systems in real-world scenarios. Recent works, using auxiliary outliers in training, have shown good potential. However, they seldom concern the intrinsic correlations between in-distribution (ID) and OOD data. In this work, we discover an obvious correlation that OOD data usually possesses significant ID attributes. These attributes should be factored into the training process, rather than blindly suppressed as in previous approaches. Based on this insight, we propose a structured multi-view-based out-of-distribution detection learning (MVOL) framework, which facilitates rational handling of the intrinsic in-distribution attributes in outliers. We provide theoretical insights on the effectiveness of MVOL for OOD detection. Extensive experiments demonstrate the superiority of our framework to others. MVOL effectively utilizes both auxiliary OOD datasets and even wild datasets with noisy in-distribution data. Code is available at  \textcolor{gray}{\url{https://github.com/UESTC-nnLab/MVOL}}.
\end{abstract}

% \begin{links}
% \link{Code}{https://github.com/UESTC-nnLab/MVOL}
% \link{Extended version}{https://arxiv.org/abs/xxx xxx}
% \end{links}

% Uncomment the following to link to your code, datasets, an extended version or similar.
%
% \begin{links}
%     \link{Code}{https://aaai.org/example/code}
%     \link{Datasets}{https://aaai.org/example/datasets}
%     \link{Extended version}{https://aaai.org/example/extended-version}
% \end{links}
\section{Introduction}

Modern deep neural networks could produce overconfident and unreliable predictions when test inputs are out-of-distribution (OOD) \cite{Nguyen_2015_CVPR}. This presents a crucial challenge, especially for deploying deep learning models in the real world. To tackle this challenge, recent studies have explored outlier exposure (OE)-based training strategies \cite{hendrycks2018oe}. They utilize an auxiliary outlier dataset in training to suppress models' response to outliers, \textit{i.e.}, out-of-distribution data, thus detecting those inputs far away from in-distribution (ID) data \cite{chen2021atom,ming2022poem,zhu2023DivOE}. These strategies have achieved considerable success and often perform better than those without auxiliary data \cite{VOS,ASH}.

Despite the promising results, existing OE-based methods pay limited attention to what underlying correlations exist between ID and OOD data, still suffering from the irrational use of auxiliary outliers. As shown in Figure \ref{Motivation}(a), for models trained solely on ID data, we find they tend to present higher logits (\textit{i.e.}, pre-softmax output) on specific known categories than others when their inputs are out-of-distribution. This finding implies that outliers could contain ID attributes, although they are often believed to be semantically distinct from in-distribution \cite{bai2023feed2birds}. However, previous approaches, \textit{e.g.}, outlier exposure \cite{hendrycks2018oe}, and energy-regularized learning \cite{liu2020energy}, usually ignore those intrinsic ID attributes. Specifically, they treat these attributes as entirely random noise and attempt to suppress models' responses to them blindly. Such behavior indicates unreasonable use of auxiliary outliers and may degrade the detection performance. The above discrepancy naturally motivates the following question: \textit{how can we rationally handle the intrinsic ID attributes in auxiliary outlier data for OOD detection?} 

To address this question, we propose a structured multi-view-based out-of-distribution learning framework (\textbf{MVOL}) in Figure \ref{Motivation} (d) --- tackling OOD detection via mining ID attributes in outliers.
\textit{In data level}, this framework involves an extended multi-view data model. It establishes ID and OOD data in a unified perspective, revealing intrinsic in-distribution attributes in outliers.
\textit{In model level}, MVOL employs maximum logit (MaxLogit) as the OOD score with new insight, which naturally adapts to our data model. 
To calibrate unexpected high MaxLogit produced for outlier input, MVOL involves a multi-view-based learning objective to rationally utilize ID attributes in auxiliary outliers.

\begin{figure*}[tp]
\centering
\includegraphics[scale=0.42]{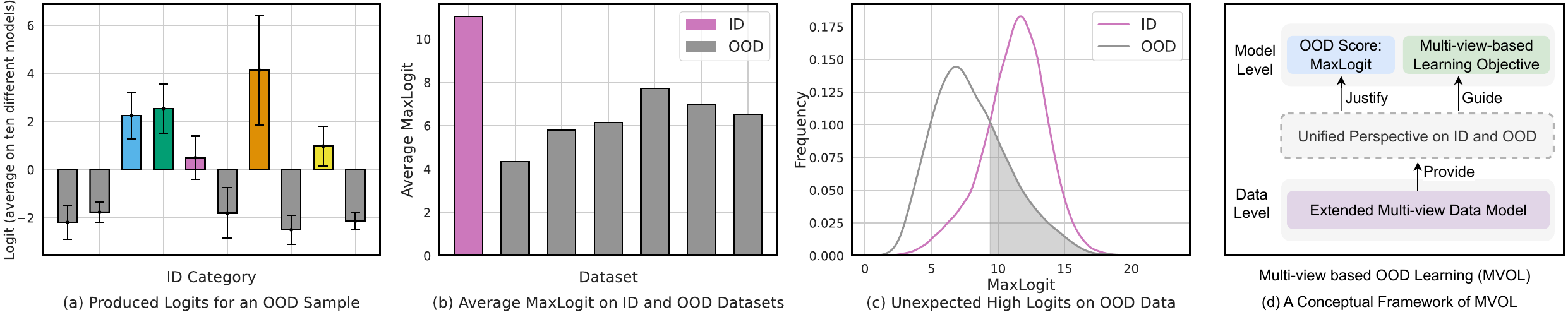}
\caption{
(a)-(c) Motivation and (d) overview of MVOL. (a) OOD data could have ID attributes, (b) outliers mainly have minor ID features, and (c) the model trained solely on ID data can produce over-activated logits for certain outliers. The gray-shaded area highlights those outliers with high probability. 
Refer to Appendix E for more experimental details.
}
\label{Motivation}
\end{figure*}
\noindent\textbf{Extended Multi-view Data Model (MVDM).}~The original MVDM \cite{allen-zhu2023towards} posits that in a natural dataset, a data point comprises main features, minor features, and noise \footnote{
In a vision dataset featuring car and cat categories, a car image typically comprises headlights, wheels, or windows, termed main features. Additionally, some cars might have headlights resembling cat eyes, which can also be recognized as a small fraction of ``cat features", termed minor features.
}. To formally study the correlations between ID and OOD data, we extend the original MVDM to the context of out-of-distribution detection by assuming that \textit{outliers mainly consist of minor ID features and noise}. This new assumption originates from our two empirical findings. \textbf{(1)} Outliers contain ID attributes, as indicated by Figure \ref{Motivation}(a) and analyzed above. \textbf{(2)} These ID attributes generally have lower magnitudes than those in ID data. As shown in Figure \ref{Motivation} (b), by comparing averages of maximum logit on ID and six OOD datasets, we observe that ID data has a significantly larger value. This observation suggests that ID attributes in outliers correspond to minor ID features. This is because compared to main ID features, minor ones typically carry a smaller weight, leading a well-trained model to produce smaller logit responses, as shown in MVDM.

\noindent\textbf{New Insight on MaxLogit as an OOD Score.}~Beyond empirical evidence in previous works \cite{hendrycks2022maxlogit,vaze2022openset}, our extended MVDM justifies that MaxLogit serves as an interpretable and effective OOD score. (1) \textbf{Interpretable}: MaxLogit measures ID attributes contained in test inputs. Outliers with minor ID features tend to have a low score.
% MaxLogit scores inputs based on the inherent distinction in ID attributes between ID and OOD data. 
(2) \textbf{Effective}: When a well-calibrated model learns all ID features, its error bound is near zero theoretically.
Nevertheless, for a model trained solely on ID data, it could still be uncalibrated and produce unexpected high MaxLogit for outlier inputs, as shown in Figure \ref{Motivation}(c), which degrades the detection performance \cite{wei2022logitNorm}.

\noindent\textbf{Multi-view-based Learning Objective.}~
% To calibrate model predictions on auxiliary outliers, 
We propose a learning objective for OOD detection --- explicitly utilizing the intrinsic minor ID features in auxiliary outliers to calibrate unexpected high logits.
This is in contrast with confidence loss \cite{Kimin2018training} in OE, which forces models to respond \textit{uniformly} to all ID categories for outliers. It causes models to underemphasize minor ID features yet overemphasize noise, thus violating the heuristics that minor ID features deserve larger logit responses than noise. Accordingly, our learning process encourages models to more accurately respond to the minor ID features and noise in outliers. 

\noindent\textbf{Contributions:}
\textbf{(1)} To our knowledge, 
we are among the first works that explicitly mine in-distribution attributes in outliers to tackle out-of-distribution detection. 
\textbf{(2)} We propose a multi-view-based out-of-distribution learning framework (\textbf{MVOL}) to handle the intrinsic ID attributes in outliers rationally. 
\textbf{(3)} We show that MVOL performs overall strong performance while training with OOD datasets and even wild datasets with ID data as noise.

\section{Related Work}
\textbf{Out-of-distribution Detection.}
\cite{hendrycks2017MSP} presents a baseline for OOD detection using a pre-trained model's maximum softmax probability. Subsequent works develop different scoring functions, such as ODIN \cite{shiyu2018odin}, Mahalanobis \cite{lee2018MAHA}, Energy \cite{liu2020energy}, MaxLogit \cite{hendrycks2022maxlogit,vaze2022openset}. Some methods synthesize outliers to regularize models. Synthesized outliers are represented in feature space \cite{VOS,tao2023nonparametric} or pixel space \cite{ATOL,dreamOOD}. 

Another promising line of work uses auxiliary outliers to regularize models. 
% Confidence loss (CL) \cite{Kimin2018training} adopted in Outlier Exposure \cite{hendrycks2018oe} 
Outlier exposure \cite{hendrycks2018oe} using confidence loss \cite{Kimin2018training} encourages models to output a uniform distribution of softmax probability on outliers. Energy-based regularization (Energy) \cite{liu2020energy} trains models to widen the energy gap by increasing the sum of logits for ID samples above a lower bound and reducing those of outliers below an upper bound. Both OE and Energy fail to consider ID and OOD correlations explicitly. Some other works highlight using informative outliers to regularize models \cite{chen2021atom,ming2022poem,zhu2023DivOE}. They often employ OE and Energy-based learning objectives and fail to utilize correlations between ID and OOD data in regularization. In addition, \cite{jingkang2021iccvwild,katz2022wilds} explore a more complex setting where auxiliary datasets could contain ID noise. Strong ID attributes exist here. Our method also shows good robustness in this case.

\noindent\textbf{Multi-view Data Model (MVDM).}
\cite{allen-zhu2023towards} presents the MVDM to explore ensemble learning. 
It assumes a natural data point has multiple views that can be used for classification. A single neural network trained with gradient descent might classify a data point based on only one of those views. Ensembling multiple models or distilling ensemble models into one model could uncover all features. Previous research in OOD detection with auxiliary outliers usually characterizes ID and OOD data in latent space using a simplified Gaussian-like model \cite{chen2021atom}. 
This model makes it hard to understand the correlations between ID and OOD data concretely. 
Instead, we extend MVDM to provide a systematic understanding of their correlations.

\section{Preliminaries on Multi-view Data Model}
% We first give the formal definition of Multi-view Data Model (MVDM) \cite{allen-zhu2023towards} by restating it in the context of OOD detection. 
We first give the formal in-distribution definition by restating the original Multi-view Data Model \cite{allen-zhu2023towards} in OOD detection context. 
% Moreover, to understand two typical settings of MVDM, we briefly summarize some important conclusions on the feature learning of neural networks.
Moreover, since MVDM contains two typical settings, \textit{i.e.}, single model and ensemble distillation, we will review them and briefly summarize how models perform feature learning in these two settings.

\subsection{In-distribution Definition and Notations} 
We consider a $k$-class classification setting in OOD detection. Any data input is represented by $P$ patches $X = (x_1,x_2,\dots, x_P) \in \mathbb {R}^{d \times P}$, where each patch has dimension $d$. For ID data, labels belong to $[k]$. It is assumed that there are multiple associated features for each label $ j \in [k] $, say two features for simplicity, represented by two \textit{orthogonal unit feature vectors} $ v_{j,1}, v_{j,2} \in \mathbb{R}^d$.
%, \textit{i.e.}, 
% as follows:
% \begin{assumption}
% The feature vectors are orthogonal, i.e.,
% \begin{align*}
% &\forall j, j' \in [k], \forall \ell, \ell' \in [2], \|\mathbf{v}_{j,\ell}\|_2 = 1 \ \text{and}\ \mathbf{v}_{j,\ell} \cdot \mathbf{v}_{j',\ell'} = 0 \\ &\text{when} (j, \ell) \neq (j', \ell')
% \end{align*}
% \end{assumption}

% MVDM provides a formal framework to define data distribution. 
% We give its related convention and notations as follows.
% Let \( C_p \) be a global constant, \( s \in [1, k^{0.2}] \) be a sparsity parameter. 
Our in-distribution definition is adapted from the original MVDM for OOD detection.
$D^{in}$ is defined via the multi-view distribution $D_m^{in}$ and single-view distribution $D_s^{in}$. $D_m^{in}$ represents images that we can observe all main ID features and use any of these features to classify them. $D_s^{in}$ represents images taken from a particular angle, where one or more main ID features can be missing, \textit{i.e.},
% These definitions are given below. 
% Refer to Appendix \ref{Full Version of Data Distribution and Notations} for detailed definitions.
\begin{definition}[data distributions \( D_m^{in} \) and \( D_s^{in} \)]
Given \( D \in \{D_m^{in}, D_s^{in}\} \), we define \( (X^{in}, y) \sim D\) as follows. First, choose the label \( y \in [k] \) uniformly at random. Then, \( X^{in} \) is generated as follows:\\
    % \textbf{1.} Denote \( \mathcal{V}(X^{in}) = \{v_{y,1}, v_{y,2}\} \cup \mathcal{V}' \) as the set of feature vectors used in this data vector \( X^{in} \), where \( \mathcal{V}' \) is a set of features uniformly sampled from \( \{v_{j,1}', v_{j,2}'\}_{j \in [k] \setminus \{y\}} \) each with probability \( \frac{s}{k} \).\\
    \textbf{1.} Denote \(\mathcal{V}(X^{in}) = \{v_{y,1}, v_{y,2}\} \cup \mathcal{V}'\) as the set of feature vectors used in this data vector \(X\), where \(\{v_{y,1},v_{y,2}\}\) are \textbf{main ID features} and \(\mathcal{V}'\) is a set of \textbf{minor ID features} uniformly sampled from \(\{v_{j,1}, v_{j,2}\}_{j \in [k] \setminus \{y\}}\).\\
    %, each with probability \(\frac{s}{k}\).\\
    \textbf{2.} For each \( v \in \mathcal{V}(X) \), pick 
    %\( C_p \) 
    many disjoint patches in \( [P] \) and denote them as \( \mathcal{P}_v(X^{in}) \subset [P] \). 
    % (the distribution of these patches can be arbitrary). 
    We denote \( \mathcal{P}(X^{in}) = \bigcup_{v \in \mathcal{V}(X^{in})} \mathcal{P}_v(X^{in}) \).\\
    \textbf{3.} If \( D = D_s^{in} \) is the single-view distribution, pick a value \( \hat{\ell} = \ell(X^{in}) \in [2] \) uniformly at random.\\
    \textbf{4.} For each \( v \in \mathcal{V}(X^{in}) \) and \( p \in \mathcal{P}_v(X^{in}) \), \( x_p = z_p v + \) ``noise'' \( \in \mathbb{R}^d \). These random coefficients \( z_p \geq 0 \) satisfy:\\
    % , where the random coefficients \( z_p \geq 0 \) satisfy that:\\
             \textbf{$\cdot$} In the case of multi-view distribution \(D = D_m^{in}\):\\
             1) \( \sum_{p \in \mathcal{P}_v(X^{in})} z_p \in [1, O(1)] \) when \( v \in \{v_{y,1}, v_{y,2}\}\); \\
             2) \( \sum_{p \in \mathcal{P}_v(X^{in})} z_p \in [\Omega(1), 0.4] \) when \( v \in V(X^{in}) \setminus \{v_{y,1}, v_{y,2}\}\).\\
             \textbf{$\cdot$} In the case of single-view distribution \( D = D_s^{in} \):\\
             1) \( \sum_{p \in \mathcal{P}_v(X^{in})} z_p \in [1, O(1)] \) when picked \( v = v_{y, \hat{\ell}} \);\\
             2) \( \sum_{p \in \mathcal{P}_v(X^{in})} z_p\) is much smaller than that of \(v_{y, \hat{\ell}}\) and can be ignored when \( v \in V(X^{in}) \setminus \{v_{y, \hat{\ell}}\} \).\\
             % 2) \( \sum_{p \in \mathcal{P}_v(X^{in})} z_p \in [\rho, O(\rho)] \) when \( v = v_{y, 3 - \hat{\ell}} \);\\ 
             % 3) \( \sum_{p \in \mathcal{P}_v(X^{in})} z_p \in [\Omega(\Gamma), \Gamma] \)when \( v \in V(X^{in}) \setminus \{v_{y,1}, v_{y,2}\} \).\\
    \textbf{5.} For each \( p \in P\setminus P(X^{in}) \), \( x_p \) consists only of ``noise''.
\end{definition}
\begin{definition}[$D^{in}$ and $Z^{in}$]
The final in-distribution $D^{in}$ consists of data from $D_m^{in}$ w.p. $\mu$ and $D_s^{in}$ w.p. $1-\mu$. We are given $N$ training samples from $D_{in}$, and denote the training data set as $Z^{in} = Z_m^{in} \cup Z_s^{in}$ where $Z_m^{in}$ and $Z_s^{in}$ respectively represent multi-view and single-view training data. We write $(X^{in}, y) \sim Z^{in}$ as $(X^{in}, y)$ sampled randomly from $Z^{in}$.
% sampled randomly from the empirical data set.
% , and denote $N_s = |Z_s^{in}|$. 
% We again for simplicity focus on the setting when \(\mu = \frac{1}{\text{poly}(k)}\) and we are given samples \(N = k^{1.2}/\mu\) so each label \(i\) appears at least \(\Omega(1)\) in \(Z_s^{in}\).
\end{definition}
\label{learningnetwork}
% \begin{definition}
% For integer \( q \geq 2 \) and threshold \( \rho = \frac{1}{\text{poly}(k)} \), the smoothed function \(\text{ReLU}(\tilde{z})\) is defined as:
% \begin{equation*}
%     \widetilde{\text{ReLU}}(z) = 
%     \begin{cases} 
%     0 & \text{for } \tilde{z} \leq 0, \\
%     \frac{z^q}{q e^{\rho}} & \text{for } \tilde{z} \in [0, \rho], \\
%     z - \left(1 - \frac{1}{q}\right)e & \text{for } \tilde{z} \geq \rho.
%     \end{cases}
% \end{equation*}
% \end{definition}
Furthermore, a simplified neural network is provided to conduct analyses on MVDM.
Concretely, this neural network is represented by a tow-layer convolutional network \( F(X) = (F_1(X), \ldots, F_k(X)) \in \mathbb{R}^k \) parameterized by \( w_{i,r} \in \mathbb{R}^d \). For \( i \in [k], r \in [m] \), $w_{i,r}$ satisfies:
% We use a smoothed function $\widetilde{\text{ReLU}}$ used in \cite{allen-zhu2023towards}. We put the definition of $\widetilde{\text{ReLU}}$ in the Appendix.
% The learner network \( F(X) = (F_1(X), \ldots, F_k(X)) \in \mathbb{R}^k \) is a two-layer convolutional network parameterized by \( w_{i,r} \in \mathbb{R}^d \) for \( i \in [k], r \in [m] \), satisfying
\[
\forall i \in [k]: F_i(X) = \sum_{r \in [m]} \sum_{p \in [P]} \widetilde{\text{ReLU}}(\langle w_{i,r}, x_p \rangle),
\]
where $\widetilde{\text{ReLU}}$ is a smoothed activation function. 
More details are given in Appendix A. 

\subsection{
% Feature-Learning Mechanism of Single Model and Ensemble Distillation Model 
%\cite{allen-zhu2023towards}
Feature-Learning based on MVDM
}
To briefly introduce the learning mechanism under single model and ensemble distillation model settings, a thought experiment is utilized in \cite{allen-zhu2023towards}. 
% Here, we give its basic settings and main conclusions to help readers understanding.
Here, we present its basic settings and main conclusions to facilitate the understanding of our work.
% \( \widetilde{O}, \widetilde{\Omega}, \widetilde{\Theta} \) notations are used to hide polylogarithmic factors in \( k \). % More details and theoretical proofs can be found in \cite{allen-zhu2023towards}. 

\noindent\textbf{(1) Basic Settings:} Given $k=2$ and four ``features" \(v_1, v_2, v_3, v_4\). \( v \in \{v_1, v_2\} \) is associated with label 1, and \( v{'} \in \{v_3, v_4\} \) is associated with 2. Main and minor ID features weigh 1 and 0.1, respectively. For each category, there are $80\%$ training data from $D_m^{in}$. The remaining $20\%$ are from $D_s^{in}$, \textit{i.e.}, one half has one main ID feature, and the other half has the second.
Data points with labels 1 and 2 could have minor ID features \( v{'} \) and $v$, respectively. 

\noindent\textbf{(2) Main Conclusions:}
\textbf{(i) Single model} can learn only one feature of each category.
While training a single neural network with random initialization using cross-entropy loss,
% via gradient descent
the network will pick up one feature for each label and correctly classify $90\%$ training examples, \textit{i.e.}, $80\%$ multi-view data and $10\%$ single-view data with the picked feature. The remaining $10\%$ examples will be memorized using noise in the data. Thus, this single model learns to classify each category using only one feature.
\textbf{(ii) Ensemble model} can learn both features of each category.
Depending on the random initialization, each network will pick up \( v_1 \) or \( v_2 \), each with a probability of $50\%$. Therefore, as long as we ensemble many independently trained models, with a high probability their ensemble will pick up all features \( v_1, v_2, v_3, v_4 \).
\textbf{(iii) Ensemble distillation model} can learn both features of each category.
Ensemble distillation \cite{hinton2015distilling} trains an individual model to match the ensemble models' outputs. The key idea is that the model can learn all features via the soft labels produced by ensemble models.

\section{Method}
% In this section, we present our systematic framework, \textit{Multi-view based out-of-distribution learning} (MVOL) --- mining in-distribution attributes in outliers.
In this section, we propose a structured multi-view-based out-of-distribution learning framework (\textbf{MVOL}) to tackle OOD detection via mining ID attributes in outliers. 
\subsection{Extended Multi-view Data Model}
We extend MVDM to study the correlations between ID and OOD data formally. OOD data is assumed to mainly consist of minor ID features and noise, \textit{i.e.},
% To formally study the correlations between ID and OOD data, we extend MVDM to define out-of-distribution by assuming that OOD data mainly consists of minor ID features and noise. 
% Our formal definitions are as follows:
\begin{definition}[Out-of-distribution $D^{out}$]We define $X^{out} \sim D^{out}$ as follows. $X^{out}$ is generated by:\\
    \textbf{1}. Denote $\mathcal{V}(X^{out})$ as the set of \textbf{minor ID feature} vectors used in this data vector $X^{out}$, which are uniformly sampled from $\{v_{j , 1}, v_{j, 2}\}_{{j}\in{[k]}}$.\\
    % , each with probability $\frac{s}{k}$. \\
    \textbf{2}. For each $v \in \mathcal{V}(X^{out})$, pick many disjoint patches in $[P]$ and denote it as $P_v(X^{out}) \subset [P]$. We denote $P(X^{out}) = \bigcup_{v\in \mathcal{V}(X^{out})} \mathcal{P}_v(X)$.\\
    \textbf{3}. For each $v \in \mathcal{V}(X^{out})$ and $p \in \mathcal{P}_v(X^{out})$, we set \( x_p = z_p v\ + \) ``noise'' \( \in \mathbb{R}^d \).
    % \[ x_p = z_pv + \sum_{v^{'}\in V}\alpha_{p,v^{'}}v^{'} + \xi_p \in \mathbb{R}^d\]
    % Above, each $\alpha_{p,v^{'}} \in [0, \gamma]$ is the feature noise, and $\xi_p \sim N(0, \sigma_p^2\mathbf{I})$
    % is an (independent) random
    % Gaussian noise. 
    These random coefficients $z_p \geq 0$ satisfy that:
        $\sum_{p \in \mathcal{P}_v(X^{out})} z_p \in [\Omega(1), 0.4]$. \\
    \textbf{4}. For each $p \in [P] \setminus \mathcal{P}(X^{out})$, $x_p$ consists only of ``noise”.
\end{definition}
\begin{definition}[$Z^{out}$]
We are given M auxiliary OOD training samples from $D^{out}$, and denote the training data set as $Z^{out}$. We write $X^{out} \sim Z^{out}$ as $X^{out}$ sampled randomly from $Z^{out}$. We are given samples $M \geq N$ to represent a large auxiliary OOD dataset.
\end{definition}
% \subsection{Maximum Logit (MaxLogit) as OOD Scoring Function}
\subsection{New Insight on MaxLogit as an OOD Score}
\label{maxlogitasagoodoodscore}
% Maximum Logit (MaxLogit) has been deployed as an OOD scoring function in some works, such as \cite{hendrycks2022maxlogit, vaze2022openset}\\
% \textbf{Maximum Logit as OOD Scoring Function.}
% With our extended MVDM, we formally revisit MaxLogit as an OOD scoring function rather than solely with empirical verification \cite{maxlogit, vaze2022openset}.
MVOL employs MaxLogit as the OOD scoring function. 
MaxLogit measures ID attributes contained in test inputs. Outliers with minor ID features tend to have a low score. We justify MaxLogit with our extended MVDM.

\noindent\textbf{MaxLogit based OOD detector.} For a sample $(X^{in}, y) \sim D^{in}$ or $ X^{out} \sim D^{out}$ and a neural network $F$. Feeding $X \in \{X^{in}, X^{out}\}$ into $F$, we get logit outputs \( F(X) = (F_1(X), \ldots, F_k(X)) \in \mathbb{R}^k \). Then, the MaxLogit scoring function is given as follows.
\[\text{MaxLogit}(X; F) = \text{max} (F_1(X), \ldots, F_k(X)).\]
Then MaxLogit can be used in the following OOD detector:
\begin{equation}
\label{OODdetector}
G(X;\tau, F) = 
\begin{cases} 0 & \text{if } \text{MaxLogit}(X; F) \leq \tau, \\
1 & \text{if } \text{MaxLogit}(X; F) > \tau,
\end{cases}
\end{equation}
where 0 and 1 represent OOD and ID by convention, respectively. The threshold $\tau$ is chosen so that a high fraction of ID data (\textit{e.g.}, 95\%) is with the MaxLogit above $\tau$. 
% The efficacy of MaxLogit in OOD detection was demonstrated through empirical evidence by \cite{maxlogit}. In a subsequent study on open set recognition \cite{osr}, \cite{vaze2022openset} proposed that cross-entropy models tend to project features of labeled training data, \textit{i.e.}, ID data, further from the origin, resulting in larger feature norms, while still ensuring that "uncertain" samples have smaller feature norms. In OOD detection, "uncertain" samples could be interpreted as OOD data. Accordingly, the MaxLogit of a sample could serve as a reliable indicator of whether it belongs to ID or OOD data, as a larger feature norm usually corresponds to a larger MaxLogit. In other words, larger MaxLogit values indicate ID samples, whereas smaller MaxLogit values indicate OOD samples. 

\noindent\textbf{Theoretical analysis with our extended MVDM.}

% \textbf{New Insights on MaxLogit for OOD Detection.} 
\noindent\textbf{(1) Interpretability of MaxLogit:} We demonstrate that MaxLogit can be an interpretable ID/OOD indicator, supported by our extended MVDM stated above. We define:
\begin{align}
&\ I(X) = \text{argmax}_{j \in [k]} \sum_{p \in P_{v_{j, 1}}(X) \bigcup P_{v_{j, 2}}(X)} z_p;\\ 
&\ 
z(X) = \text{max}_{j \in [k]} \sum_{p \in P_{v_{j, 1}}(X) \bigcup P_{v_{j, 2}}(X)} z_p 
\end{align}
$I(X)$ is the category with the largest sum of coefficients on associated features. $z(X)$ is this sum value.
\begin{proposition}
\label{proposition1}
For every $X^{out} \sim D^{out}$, every $(X^{in}_s, y_s) \sim D_{s}^{in}$, and every $(X^{in}_m, y_m) \sim D_{m}^{in}$, we have:
\[z(X^{out}) < z(X^{in}_s) \quad\text{and}\quad z(X^{out}) < z(X^{in}_m)\]
\end{proposition}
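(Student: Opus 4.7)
The plan is to prove Proposition \ref{proposition1} by directly combining the quantitative bounds prescribed in Definitions 1 and 3 on the per-feature coefficient sums, then taking the maximum over categories. The key observation is that $z(X)$ is a maximum over $k$ category-level sums, each of which decomposes into at most two per-feature sums (one for $v_{j,1}$ and one for $v_{j,2}$), so it suffices to carefully track the ranges these sums can take for each distribution.

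First, I would derive an upper bound on $z(X^{out})$. For any $j \in [k]$, the features $v_{j,1}$ and $v_{j,2}$ can appear in $\mathcal{V}(X^{out})$ only as minor ID features, and Definition 3(3) constrains each corresponding per-feature coefficient sum to lie in $[\Omega(1), 0.4]$ whenever it is nonzero. Thus for every category $j$, the sum $\sum_{p \in \mathcal{P}_{v_{j,1}}(X^{out}) \cup \mathcal{P}_{v_{j,2}}(X^{out})} z_p$ is at most $0.4 + 0.4 = 0.8$, and therefore $z(X^{out}) \leq 0.8$.

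Next, I would derive lower bounds for the ID cases using Definition 1. For $X^{in}_m \sim D_m^{in}$, both main ID features $v_{y_m,1}$ and $v_{y_m,2}$ appear in $\mathcal{V}(X^{in}_m)$ with per-feature coefficient sums in $[1, O(1)]$, so the $j=y_m$ term in the max is at least $1 + 1 = 2$, giving $z(X^{in}_m) \geq 2$. For $X^{in}_s \sim D_s^{in}$, the picked feature $v_{y_s, \hat{\ell}}$ appears with coefficient sum in $[1, O(1)]$, and the other features of category $y_s$ contribute nonnegatively, so the $j=y_s$ term is at least $1$, giving $z(X^{in}_s) \geq 1$. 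Chaining with the upper bound yields $z(X^{out}) \leq 0.8 < 1 \leq z(X^{in}_s)$ and $z(X^{out}) \leq 0.8 < 2 \leq z(X^{in}_m)$, which completes the proof.

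The argument is essentially a bookkeeping check against the definitions; I do not anticipate any genuine technical obstacle. The only mildly subtle point is confirming that the bounds $[\Omega(1), 0.4]$ and $[1, O(1)]$ are meant to hold for every realization (not just in expectation), so that the strict separation $0.8 < 1$ translates into a pointwise inequality between the OOD and ID scores. I would state this interpretation explicitly at the start of the proof so the per-sample quantifier in Proposition \ref{proposition1} is unambiguously justified.
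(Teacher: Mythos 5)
Your proof is correct and follows essentially the same route as the paper, which likewise bounds $z(X^{out}) \leq 0.8$ via the two per-feature sums in $[\Omega(1), 0.4]$ and compares against $z(X^{in}_s) \geq 1$ and $z(X^{in}_m) \geq 2$ from the main-feature coefficient ranges. Your explicit remark that the bounds are per-realization rather than in expectation is a reasonable clarification the paper leaves implicit.
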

\begin{assumption}
\label{assumption1}
For every $i,j \in [k]$ and every $l,l^{'} \in [2]$, consider an ideal classifier $F$ satisfying
\[\sum_{r \in [m]} [\langle w_{i,r}, v_{i,l} \rangle ]^+ = \sum_{r \in [m]} [\langle w_{j,r}, v_{j,l^{'}} \rangle]^+ \pm o(1)\]
\end{assumption}
\noindent Note that throughout this paper $[a]^+ = \max\{0, a\}$.

\textbf{Intuition:} When Assumption \ref{assumption1} is valid, we can deduce that $F_{I(X^{out})}({X^{out}}) < F_{I(X^{in}_s)}(X^{in}_s) $ and $F_{I(X^{out})}({X^{out}}) < F_{I(X^{in}_m)}(X^{in}_m)$ with Proposition \ref{proposition1}, which corresponds to relation among MaxLogit scores. In other words, the MaxLogit of ID data can be higher than that of OOD data. Therefore, MaxLogit serves as an interpretable OOD scoring function
(more details in Appendix B). 

\noindent\textbf{(2) Efficacy of MaxLogit:} 
We provide error-bound analysis in the single model setting, where models learn only part of features for classification, and in the ensemble distillation model setting, where models uncover all features via distilling knowledge from ensemble models. These analyses reveal the efficacy of MaxLogit.
\begin{assumption}
\label{assumption2}
For every $i,j \in [k]$, every $l,l^{'} \in [2]$ and $v_{i,l}, v_{j,l'}$ are learned by single model, consider an ideal classifier $F$ satisfing: 
\[\sum_{r \in [m]} [\langle w_{i,r}, v_{i,l} \rangle ]^+ = \sum_{r \in [m]} [\langle w_{j,r}, v_{j,l^{'}} \rangle]^+ \pm o(1)\]
\end{assumption}

\begin{theorem}[Calibrated Single Model]
\label{theorem1}
Suppose we train a single model 
% using the gradient descent update starting 
from random initialization. 
% An auxiliary OOD dataset is employed to calibrate model logits. 
When Assumption \ref{assumption2} is valid, we have:
\[
    FNR(F) \leq \frac{1}{2}(1 - \mu +o(1))
\]
\end{theorem}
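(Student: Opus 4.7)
The plan is to bound the false negative rate by decomposing the in-distribution expectation according to the two mixture components $D_m^{in}$ (weight $\mu$) and $D_s^{in}$ (weight $1-\mu$) and showing that, under the single-model training dynamics summarized in the thought experiment, only the single-view examples whose sole present main feature was \emph{not} picked by the model can fail the MaxLogit threshold. First I would fix an arbitrary threshold $\tau$ calibrated to retain a large fraction of ID data and write
\[
\text{FNR}(F) \;=\; \mu \cdot \Pr_{D_m^{in}}[\text{MaxLogit}(X;F)\le \tau] \;+\; (1-\mu)\cdot \Pr_{D_s^{in}}[\text{MaxLogit}(X;F)\le \tau].
\]
Then I would handle each term by identifying which main features the single model has learned; let $\ell^*(y)\in[2]$ denote the (unique) index such that $v_{y,\ell^*(y)}$ is picked by $F$ for class $y$.

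For the multi-view term I would use the fact that every $X^{in}_m\sim D_m^{in}$ with label $y$ contains \emph{both} $v_{y,1}$ and $v_{y,2}$, with $\sum_{p\in \mathcal{P}_v(X^{in}_m)} z_p \in [1,O(1)]$ for $v\in\{v_{y,1},v_{y,2}\}$. In particular $v_{y,\ell^*(y)}$ is present, so plugging the patch expansion into the network and invoking Assumption~\ref{assumption2} (calibration on the learned features) gives $F_y(X^{in}_m)=\sum_{r}\sum_p \widetilde{\text{ReLU}}(\langle w_{y,r},x_p\rangle)\ge \Theta(1)-o(1)$, hence $\text{MaxLogit}(X^{in}_m;F)\ge \Theta(1)-o(1)$; the smoothed ReLU contribution from noise patches and from non-associated feature patches is negligible. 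Choosing $\tau$ below this constant makes the first probability $o(1)$.

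For the single-view term I would split on whether $\hat\ell=\ell(X^{in}_s)$ equals $\ell^*(y)$. When $\hat\ell=\ell^*(y)$ (probability $1/2$), exactly the learned feature is present with coefficient sum in $[1,O(1)]$, and the same calibration argument yields a large $F_y(X^{in}_s)$, so this case contributes $o(1)$ to the FNR. When $\hat\ell\ne \ell^*(y)$ (probability $1/2$), the only main feature present is the one the network did \emph{not} pick, so its class-$y$ logit is driven only by noise memorization and by minor ID features; both contributions are known from the MVDM analysis to stay below any fixed calibration constant up to $o(1)$ error, so these points fall below the threshold. Combining,
\[
\text{FNR}(F) \;\le\; \mu\cdot o(1) \;+\; (1-\mu)\Bigl(\tfrac{1}{2}\cdot 1 + \tfrac{1}{2}\cdot o(1)\Bigr) \;=\; \tfrac{1}{2}(1-\mu) + o(1),
\]
which yields the claimed bound after absorbing the additive $o(1)$ into $\tfrac{1}{2}(1-\mu+o(1))$.

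The main obstacle I anticipate is step three: rigorously upper-bounding the logit produced on single-view data whose present main feature was not picked. The intuition from the thought experiment is that such examples are ``memorized via noise,'' but translating this into a formal statement that $F_y(X^{in}_s)$ stays below $\tau$ with high probability requires controlling both the noise-correlation terms $\widetilde{\text{ReLU}}(\langle w_{y,r},\text{noise}\rangle)$ and the minor-feature contributions (whose coefficient sum is at most $0.4$), and verifying they cannot jointly inflate MaxLogit above the calibration level set by the learned features. I expect this to rely on lemmas from the original MVDM analysis bounding $\|w_{y,r}\|$ and the inner products of weights with noise, which I would import as black-box facts rather than re-derive.
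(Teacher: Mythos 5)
Your proposal is correct and follows essentially the same route as the paper's proof: decompose over the $D_m^{in}$/$D_s^{in}$ mixture, use Assumption~2 to put all learned features at a common logit level $C$, observe that multi-view data and the half of single-view data whose present feature was picked clear the threshold, and charge the remaining half of single-view data (memorized via noise, with the paper bounding its logit by $O(\rho)+1/\mathrm{polylog}(k)$ and OOD logits by $0.4C$) to the FNR. The technical obstacle you flag is exactly the step the paper also handles by importing the MVDM induction hypotheses as black-box facts.
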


\begin{theorem}[Calibrated Ensemble Distillation Model]
\label{theorem2}
Suppose we train a model using ensemble distillation. 
% An auxiliary OOD dataset is employed to calibrate model logits. 
When Assumption \ref{assumption1} is valid, we have:
\[
    FNR(F) \leq o(1)
\]
\end{theorem}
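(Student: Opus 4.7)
The plan is to exploit the defining property of the ensemble-distillation setting recalled in the feature-learning subsection: the distilled model $F$ recovers both features $v_{j,1}$ and $v_{j,2}$ for every class $j \in [k]$, whereas the single model of Theorem 1 picks up only one per class. Paired with the uniform magnitude condition of Assumption 1, this will yield a robust lower bound on $\text{MaxLogit}(X^{in}; F)$ over the \emph{entire} in-distribution---crucially including the single-view portion $D_s^{in}$ that contributed the $\tfrac{1}{2}(1-\mu)$ term in Theorem 1.

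Concretely, I would proceed in four steps. First, use Assumption 1 to fix a common value $c$ with $\sum_{r}[\langle w_{j,r}, v_{j,l}\rangle]^+ = c \pm o(1)$ uniformly over $(j,l) \in [k]\times[2]$. Second, for an arbitrary $(X^{in}, y) \sim D^{in}$, at least one main feature $v_{y,\hat{\ell}}$ appears in $X^{in}$ with coefficient sum in $[1, O(1)]$ (by Definition 1, whether the sample is multi-view or single-view), and because this feature has been learned by $F$, pushing its patches through the network and summing over $r$ yields $F_y(X^{in}) \geq c - o(1)$. Third, for any $X^{out} \sim D^{out}$, Proposition 1 combined with the $0.4$ upper bound on minor-feature coefficient sums gives $\text{MaxLogit}(X^{out}; F) \leq 0.4\,c + o(1)$. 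Fourth, I would select the threshold $\tau$ inside the gap $(0.4\,c + o(1),\; c - o(1))$, which simultaneously keeps the ID coverage near $1$ and ensures that the fraction of ID samples with $\text{MaxLogit} \leq \tau$ is $o(1)$, hence $FNR(F) \leq o(1)$.

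The principal obstacle will be bookkeeping of the $o(1)$ error terms: the $\pm o(1)$ slack in Assumption 1, the aggregate contribution of the ``noise'' patches $p \notin \mathcal{P}(X^{in})$ after passing through $\widetilde{\text{ReLU}}$, and any residual cross-talk $\langle w_{i,r}, v_{j,l}\rangle$ for $i \neq j$. I would package these into a single high-probability concentration estimate, showing that uniformly over $(X^{in}, y) \sim D^{in}$ such contributions are $o(1)$ relative to the main-feature response---an analogue of the noise-suppression step implicit in the proof of Theorem 1. The decisive qualitative point, and the reason the bound collapses from $\tfrac{1}{2}(1-\mu)$ in Theorem 1 to $o(1)$ here, is that the main-feature lower bound $c - o(1)$ now holds for \emph{both} $D_m^{in}$ and $D_s^{in}$: distillation has eliminated the half of single-view samples whose sole active main feature was the one the single model failed to learn.
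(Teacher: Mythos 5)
Your proposal follows essentially the same route as the paper's proof: use Assumption 1 to pin all learned feature responses to a common scale $C'$, observe that ensemble distillation recovers both features per class so every ID sample (single-view included) has a learned main feature forcing $F_y(X^{in}) \geq C' - o(1)$, bound the outlier response by $0.4\,C' + o(1)$ via the minor-feature coefficient cap, and conclude from the resulting gap. The qualitative point you emphasize---that the $\tfrac{1}{2}(1-\mu)$ loss in Theorem 1 came precisely from single-view samples whose active main feature the single model missed, and that distillation removes this failure mode---is exactly the comparison the paper draws between the two theorems.
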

Theorem \ref{theorem1} tells us \textbf{i)} the error bound in the single model setting can be established with the multi-view data proportion $\mu$ in ID datasets and \textbf{ii)} this bound decreases as $\mu$ increases. Theorem \ref{theorem2} shows that the error bound can reach near zero in the ensemble distillation model setting, where models learn all features. It highlights the superiority of MaxLogit as an OOD score.
% Additionally, a comparison of Theorem \ref{theorem1} and \ref{theorem2} indicates 
% OOD detection performance benefits from learning more features. 
% Refer to experiments and Appendix B for empirical and proof details, respectively.
Refer to Experiments and Appendix B for empirical and proof details, respectively.
% Empirical and proof details are put in Experiments and Appendix B, respectively.
% Empirical results and formal proof are put in experiment section and Appendix, respectively.
% According to Theorem \ref{theorem1}, the error bound in the single model setting can be established with the multi-view data proportion $\mu$ in the ID training set. Essentially, the OOD error bound decreases as $\mu$ increases. Additionally, Theorem \ref{theorem2} confirms that the error bound in the ensemble distillation setting where models learn all features can reach near zero. Proof can be found in the Appendix.
\begin{figure*}[htbp]
\centering
\includegraphics[scale=0.88]{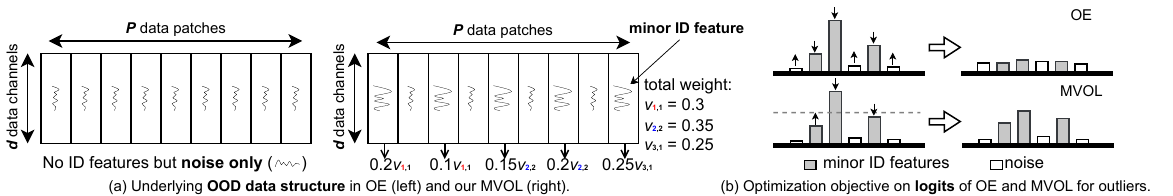}
\caption{(a) Illustration of OOD data structure assumed in OE and our MVOL. (b) From our new assumption on out-of-distribution,
OE blindly aligns the logits of all categories to the same level. Instead, MVOL adaptively aligns the logits of categories with associated minor ID features on outliers. 
The key insight is that a well-calibrated model should not display significantly distinct activation on these categories since minor ID features usually have small coefficients.
}

% Left: a model trained with ID data produce unexpected high logits for outliers. Right: FGP adaptively balances logits activated on these categories instead of blindly suppressing them like UniP. The bars colored gray represent an OOD sample has minor ID features associated with these categories. }
\label{assuption behind UniP and FGP}
\end{figure*}
% \subsection{Proposed ID Feature-Guided Logit Calibration Penalty}
\subsection{Multi-view-based Learning Objective}
% We assume ideal classifiers in the above analysis. Yet this is hard to be achieved by the models trained solely on in-distribution data; instead, these models could still produce unexpected high MaxLogit for OOD inputs. The representative confidence loss (CL) \cite{Kimin2018training} overlooks intrinsic minor ID features in OOD data, treating them as random noise and thus suffering from irrational calibrating logits.
% In this section, we delve into the drawbacks of
% CL with our extended MVDM, and propose a multi-view based learning objective to address them. Our analysis is conducted in the single model setting and can be easily extended to the ensemble distillation setting.
% We assume ideal classifiers in the above analysis. Yet this is hard to achieve by the models trained solely on in-distribution data; instead, these models could still produce unexpectedly high MaxLogit for OOD inputs. 
% While the preceding analysis assumes ideal classifiers, it is hard to achieve by models trained solely on in-distribution data \cite{wei2022logitNorm}. These models could still produce unexpected high MaxLogit for OOD inputs.calibrated
Models trained solely on in-distribution data could be uncalibrated and produce unexpected high MaxLogit for out-of-distribution inputs \cite{wei2022logitNorm}. We calibrate models' logits for OOD inputs with auxiliary outliers.

% We propose a learning objective for OOD detection --- explicitly utilizing the intrinsic minor ID features in auxiliary outliers to calibrate unexpected high logits.
In this section, we first use confidence loss in OE as an example to analyze the common problem in current outlier exposure-based methods, \textit{i.e.}, overlooking intrinsic ID attributes in outliers. Then a multi-view-based learning objective is proposed to utilize these attributes explicitly. 
% As these logit calibration-based analysis is orthogonal with whether single model or ensemble distillation model, we take single model.

% However, models trained solely with ID data could produce over-activated logit on input data \cite{wei2022logitNorm}, leading to the sub-optimal ID/OOD decision of MaxLogit. To address this, the Uniform Distribution Penalty (UniP) adopted in OE \cite{hendrycks2018oe} serves as a baseline, utilizing auxiliary OOD datasets to suppress over-activated logits on OOD samples \cite{wang2023outofdistributionDOS}. Nevertheless, UniP is limited as it disregards minor ID features in OOD data. In the next section, we will delve into the drawbacks of UniP in greater detail under a single model setting. Actually, our analysis can be independent of whether under single model or ensemble distillation settings. 

\noindent\textbf{Revisiting OE via our extended MVDM}\\ 
\label{Revisiting UniP via Our Extended MVDM}
\noindent\textbf{(1) Confidence loss.} 
% Previous studies typically interpret logits normalized by a softmax function as a probability vector summing to one. CL constrains the predictive distribution of auxiliary OOD data to approximate a uniform distribution, \textit{i.e.},
Previous studies typically interpret softmax-normalized logits as probability vectors summing to one. Based on this assumption, confidence loss in OE constrains the predictive distribution of auxiliary outliers, forcing it to approximate a uniform distribution, \textit{i.e.},
\begin{equation}
\begin{aligned}
\mathcal{L}_{\text{OE}}=& \frac{1}{N}\sum_{j=1}^N -\log P_{\theta}(\hat{y}=y |X_j^{in}) + \\
&\frac{\beta}{M}\sum_{j=1}^M \sum_{i=1}^k -\frac{1}{k}\log P_{\theta}(\hat{y} = i |X_j^{out})
\end{aligned}
\label{UniP(1}
\end{equation}
\noindent where $y$ is the one-hot label of ID sample $X^{in}_j$, $\beta$ is a loss weight for confidence-loss term and $\frac{1}{k}$ is the label on each category for OOD sample $X^{out}_j$. 

\noindent\textbf{(2) Pitfalls of OE on logit calibration.} 
% Our extended MVDM provides a logit calibration view to understand the scheme of CL. 
Our extended MVDM offers a logit calibration lens to revisit OE's underlying assumption.
As shown in Figure \ref{assuption behind UniP and FGP} (a) left, OE assumes outliers only contain random noise. Consequently, OE with confidence loss penalizes models to output uniform softmax-normalized logits, and produce approximate logits for all categories, as shown in Figure \ref{assuption behind UniP and FGP} (b) above.  
However, outliers might contain minor ID features that should not be ignored, as shown in Figure \ref{assuption behind UniP and FGP} (a) right that depicts a more realistic OOD data structure. 
Although unexpected high logits on outliers could be suppressed, confidence loss treats all categories equally, ignoring the inherent discrepancy between the categories only with noise (type I) and those with minor ID features (type II) in outliers.
% As a result, UniP would overemphasize noise when increasing the logit responses to type I categories, and underemphasize minor ID features when suppressing the logit responses to type II categories (near those responses to type I categories).
As a result, OE would overemphasize noise and underemphasize minor ID features when aligning the logits of all categories to the same level.
% \textbf{UDP Cause Noise Memorization}\\
% \textbf{UDP Enlarge Suppression on ID Minor Feature Response on OOD Data}

\noindent\textbf{Multi-view based learning objective.}
\label{ProposedFGP}
Given the typical problem of OE, we introduce a new learning objective. In Figure \ref{assuption behind UniP and FGP} (b) below, MVOL adaptively aligns the logits of categories with associated minor ID features on outliers, \textit{i.e.},
% \begin{align}
% \label{fgpLoss}
% \mathcal{L}^{(t)}_{\text{FGP}} &=  \frac{1}{N}\sum_j^N -\log P_{\theta}(\hat{y}=y |X_j) + \beta \frac{1}{M}\sum_j^M \sum_i^k - p_i^{(t)}\log P_{\theta}(\hat{y} = i |X_j^{out}),\\
% \label{softlabelofFGP}
% & \text{where} \quad p_i^{(t)} = \mathbf{min}(\mathbf{logit}_i(F^{(t)},X^{out}), \epsilon).
% \end{align}
\begin{align}
\label{fgpLoss}
\mathcal{L}^{(t)}_{\text{MVOL}} &=  \frac{1}{N}\sum_{j=1}^N -\log P_{\theta}(\hat{y}=y |X_j^{in}) \\
&+ \frac{\beta}{M}\sum_{j=1}^M \sum_{i=1}^k - p_{j,i}^{(t)}\log P_{\theta}(\hat{y} = i |X_j^{out}),\nonumber \\
& \text{where} \quad p_{j,i}^{(t)} = \mathbf{min}(\mathbf{logit}_i(F^{(t)},X_j^{out}), \epsilon),
\label{softlabelofFGP}
\end{align}
\(\mathbf{logit}_i(F, X) \overset{\text{def}}{=} \frac{e^{F_i(X)}}{\sum_{n\in[k]}e^{F_n(X)}}\). We take \(S_{j, I}^{(t)} = \{i\in [k]| \mathbf{logit}_i(F^{(t)}, X_j^{out}) \leq \epsilon \}\), \(S_{j, II}^{(t)} \overset{\text{def}}{=} \{i\in [k] | \mathbf{logit}_i(F^{(t)}, X_j^{out}) > \epsilon \}\) to represent those categories with smaller softmax-normalized values than threshold $\epsilon$ or not on outlier $X_j$.  
% FGP adjusts gradient weights on a per-category basis using a threshold approach. For Softmax normalized values that exceed the threshold, the \(\pi\) value is set to the threshold \(\epsilon\). For those below the threshold, the \(\pi\) value is taken as the Softmax normalized value itself.
%Using a threshold methodology, FGP adjusts the weights of gradients for each category separately. 
In Equation (\ref{softlabelofFGP}), if $\mathbf{logit}_i(F^{(t)}, X_j^{out})$ exceeds $\epsilon$, $p_{j, i}$ is set to $\epsilon$. Conversely, if $\mathbf{logit}_i(F^{(t)}, X^{out}_j)$ is less than $\epsilon$, $p_{j,i}$ adopts $\mathbf{logit}_i(F^{(t)}, X_j^{out})$ directly. 
$p_{j,i}$ can be cast as the soft label adopted in Equation (\ref{fgpLoss}). The rationale behind our learning objective will be further explained through the parameter update mechanism, \textit{i.e.},
\begin{equation}
\label{updateofFGP}
\begin{aligned} 
% \mathcal{}
&w_{i,r}^{(t+1)}= w_{i,r}^{(t)} - \eta \nabla_{w_{i,r}} L(F^{(t)}) - \frac{\eta^{'}}{M}\sum_{j=1}^M \Big[\\
\Big(&\mathbf{logit}_i(F^{(t)},X^{out}_j) \sum^{k}_{n=1}p_{j,n}^{(t)}  -p_{j, i}^{(t)}\Big) \nabla_{w_{i,r}} F^{(t)}_i(X^{out}_j)\Big].
\end{aligned}
\end{equation}
where $L(F^{(t)}$ means a cross-entropy loss for ID data.

\begin{table*}[ht]
  \caption{Main results on out-of-distribution detection. The best result is in bold and the second is underlined. Values are averaged over five runs. Post Hoc-based methods use the same five pre-trained models but apply different scoring functions.}
  \centering
  {\small
    \tabcolsep=0.25cm
    \begin{tabular}{p{0.4cm}<{\centering}l|l|lll|lll}
    \toprule
     & \multirow{2}[0]{*}{\textbf{Category}}& \multirow{2}[0]{*}{\textbf{Method}} & \multicolumn{3}{c|}{\underline{\ \ \ \textbf{CIFAR-10}\ \ \ }} & \multicolumn{3}{c}{\underline{\ \ \ \textbf{CIFAR-100}\ \ \ }} \\
     %\cmidrule(lr){4-6} \cmidrule(lr){7-9}
     & & & FPR95 $\downarrow$ & AUROC $\uparrow$ & ID-Acc $\uparrow$& FPR95 $\downarrow$& AUROC $\uparrow$& ID-Acc $\uparrow$\\
    \midrule
    \parbox[c]{0mm}{\multirow{13}{*}{\rotatebox[origin=c]{90}{{\textbf{Single Model Setting}}}}} & \multirow{7}[0]{*}{Post Hoc} & MSP & 56.29{\tiny \ $\pm$ 1.62} & 89.59{\tiny \ $\pm$ 0.63} & 94.27{\tiny \ $\pm$ 0.14} & 80.75{\tiny \ $\pm$ 0.81} & 74.70{\tiny \ $\pm$ 1.01} & \underline{74.69}{\tiny \ $\pm$ 0.21} \\
    & & Energy & 41.20{\tiny \ $\pm$ 5.28} & 89.70{\tiny \ $\pm$ 1.93} & 94.27{\tiny \ $\pm$ 0.14} & 72.58{\tiny \ $\pm$ 1.78} & 79.01{\tiny \ $\pm$ 1.12} & 74.69{\tiny \ $\pm$ 0.21} \\
    & & MaxLogit & 41.68{\tiny \ $\pm$ 4.99} & 89.69{\tiny \ $\pm$ 1.88} & 94.27{\tiny \ $\pm$ 0.14} & 73.21{\tiny \ $\pm$ 1.69} & 78.88{\tiny \ $\pm$ 1.11} & 74.69{\tiny \ $\pm$ 0.21} \\
    & & ODIN & 41.75{\tiny \ $\pm$ 3.86} & 87.38{\tiny \ $\pm$ 2.41} & 94.27{\tiny \ $\pm$ 0.14} & 68.13{\tiny \ $\pm$ 1.83} & 79.36{\tiny \ $\pm$ 0.91} & 74.69{\tiny \ $\pm$ 0.21} \\
    & & Mahalanobis & 23.96{\tiny \ $\pm$ 1.26} & 92.81{\tiny \ $\pm$ 0.32} & 94.27{\tiny \ $\pm$ 0.14} & 46.40{\tiny \ $\pm$ 3.73} & 87.44{\tiny \ $\pm$ 1.12} & 74.69{\tiny \ $\pm$ 0.21} \\
    & & KNN & 30.89{\tiny \ $\pm$ 2.76} &   94.53{\tiny \ $\pm$ 0.44} & 94.27{\tiny \ $\pm$ 0.14}& 82.02{\tiny \ $\pm$ 2.58} & 75.84{\tiny \ $\pm$ 1.35} & 74.69{\tiny \ $\pm$ 0.21}\\
    & & ASH & 40.03{\tiny \ $\pm$ 5.18} & 90.01{\tiny \ $\pm$ 1.70} & 94.26{\tiny \ $\pm$ 0.11} & 63.31{\tiny \ $\pm$ 1.91} & 79.35{\tiny \ $\pm$ 1.09} & 74.23{\tiny \ $\pm$ 0.31}\\
    \cmidrule(ll){2-9}
    & \multirow{2}[0]{*}{Outlier Synthesis} & VOS & 34.67{\tiny \ $\pm$ 5.01} & 91.54{\tiny \ $\pm$ 1.92} & \textbf{94.75}{\tiny \ $\pm$ 0.17} & 70.17{\tiny \ $\pm$ 2.52} & 81.73{\tiny \ $\pm$ 1.78} & \textbf{75.94}{\tiny \ $\pm$ 0.20}\\
    & & ATOL& 12.86{\tiny \ $\pm$ 0.59} & 97.34{\tiny \ $\pm$ 0.07} & 93.89{\tiny \ $\pm$ 0.17} & 64.67{\tiny \ $\pm$ 1.73} & 80.17{\tiny \ $\pm$ 1.34} & 72.70{\tiny \ $\pm$ 0.17}\\
    \cmidrule(ll){2-9}
    & \multirow{4}[0]{*}{Outlier Exposure} &
     Energy w/Aux & 4.70{\tiny \ $\pm$ 0.50} & 97.77{\tiny \ $\pm$ 0.06} & 90.74{\tiny \ $\pm$ 0.24} & 52.43{\tiny \ $\pm$ 3.51} & 88.40{\tiny \ $\pm$ 1.16} & 62.13{\tiny \ $\pm$ 0.27} \\
    & & OE & 4.25{\tiny \ $\pm$ 0.15} & 98.56{\tiny \ $\pm$ 0.07} & 94.47{\tiny \ $\pm$ 0.13} & 46.51{\tiny \ $\pm$ 3.65} & 89.78{\tiny \ $\pm$ 0.98} & 74.02{\tiny \ $\pm$ 0.04} \\
    & & OE + MaxLogit & \underline{4.12}{\tiny \ $\pm$ 0.20} & \underline{98.58}{\tiny \ $\pm$ 0.07} & 94.47{\tiny \ $\pm$ 0.13} & \underline{46.20}{\tiny \ $\pm$ 3.53} & \underline{90.59}{\tiny \ $\pm$ 0.87} & 74.02{\tiny \ $\pm$ 0.04} \\
    \rowcolor{\rowbg} \cellcolor{white} & \cellcolor{white} &\textbf{MVOL} {\scriptsize(ours)}  & \textbf{3.30}{\tiny \ $\pm$ 0.19} & \textbf{98.70}{\tiny \ $\pm$ 0.05} & \underline{94.68}{\tiny \ $\pm$ 0.09} & \textbf{42.96}{\tiny \ $\pm$ 0.86} & \textbf{90.69}{\tiny \ $\pm$ 0.26} & 74.29{\tiny \ $\pm$ 0.33} \\
    \midrule
    \midrule
    \parbox[c]{0mm}{\multirow{13}{*}{\rotatebox[origin=c]{90}{{\textbf{Ensemble Distillation Model Setting}}}}} & \multirow{7}[0]{*}{Post Hoc} & MSP & 55.82{\tiny \ $\pm$ 2.46} & 89.52{\tiny \ $\pm$ 0.53} & 94.36{\tiny \ $\pm$ 0.11} & 80.36{\tiny \ $\pm$ 0.72} & 74.72{\tiny \ $\pm$ 0.79} & \underline{76.99}{\tiny \ $\pm$ 0.15} \\
    & & Energy & 38.23{\tiny \ $\pm$ 1.72} & 89.97{\tiny \ $\pm$ 0.57} & 94.36{\tiny \ $\pm$ 0.11} & 71.97{\tiny \ $\pm$ 1.02} & 79.66{\tiny \ $\pm$ 0.57} & 76.99{\tiny \ $\pm$ 0.15} \\
    & & MaxLogit & 38.64{\tiny \ $\pm$ 2.00} & 89.94{\tiny \ $\pm$ 0.57} & 94.36{\tiny \ $\pm$ 0.11} & 72.71{\tiny \ $\pm$ 1.08} & 79.46{\tiny \ $\pm$ 0.60} & 76.99{\tiny \ $\pm$ 0.15} \\
    & & ODIN & 40.46{\tiny \ $\pm$ 2.20} & 86.94{\tiny \ $\pm$ 1.07} & 94.36{\tiny \ $\pm$ 0.11} & 67.71{\tiny \ $\pm$ 0.61} & 78.71{\tiny \ $\pm$ 0.78} & 76.99{\tiny \ $\pm$ 0.15} \\
    & & Mahalanobis & 23.06{\tiny \ $\pm$ 0.64} & 92.94{\tiny \ $\pm$ 0.19} & 94.36{\tiny \ $\pm$ 0.11} & \underline{42.36}{\tiny \ $\pm$ 1.93} & 88.39{\tiny \ $\pm$ 0.49} & 76.99{\tiny \ $\pm$ 0.15} \\
    & & KNN & 41.98{\tiny \ $\pm$ 2.47} & 92.70{\tiny \ $\pm$ 0.55} & 94.36{\tiny \ $\pm$ 0.11} & 84.67{\tiny \ $\pm$ 2.72} & 73.67{\tiny \ $\pm$ 1.78} & 76.99{\tiny \ $\pm$ 0.15} \\
    & & ASH  & 36.50{\tiny \ $\pm$ 1.43} & 91.55{\tiny \ $\pm$ 0.55} & 94.23{\tiny \ $\pm$ 0.09} & 59.19{\tiny \ $\pm$ 1.75} & 80.20{\tiny \ $\pm$ 0.49} & 76.41{\tiny \ $\pm$ 0.26} \\
    \cmidrule(ll){2-9}
    & \multirow{2}[0]{*}{Outlier Synthesis} & VOS & 30.58{\tiny \ $\pm$ 4.34} & 92.23{\tiny \ $\pm$ 1.07} & \textbf{95.02}{\tiny \ $\pm$ 0.11} & 72.01{\tiny \ $\pm$ 1.81} & 79.86{\tiny \ $\pm$ 1.90} & \textbf{77.18}{\tiny \ $\pm$ 0.28}\\
    & & ATOL & 28.14{\tiny \ $\pm$ 1.79} & 93.60{\tiny \ $\pm$ 0.43} & 94.19{\tiny \ $\pm$ 0.07} & 74.07{\tiny \ $\pm$ 0.98} & 77.82{\tiny \ $\pm$ 0.66} & 74.79{\tiny \ $\pm$ 0.09}\\
    % Energy {\scriptsize\cite{liu2020energy}}& 38.23{\tiny \ $\pm$ 1.72} & 89.97{\tiny \ $\pm$ 0.57} & 94.27{\tiny \ $\pm$ 0.13} & 71.97{\tiny \ $\pm$ 1.02} & 79.66{\tiny \ $\pm$ 0.57} & 76.99{\tiny \ $\pm$ 0.15} \\
    \cmidrule(ll){2-9}
    & \multirow{4}[0]{*}{Outlier Exposure}&
     Energy w/Aux & 4.10{\tiny \ $\pm$ 0.24} & 98.07{\tiny \ $\pm$ 0.04} & 91.48{\tiny \ $\pm$ 0.19} & 52.81{\tiny \ $\pm$ 3.52} & 89.14{\tiny \ $\pm$ 0.81} & 68.27{\tiny \ $\pm$ 0.48} \\
    % Energy* w/Aux & 4.13{\tiny \ $\pm$ 0.24} & 98.08{\tiny \ $\pm$ 0.03} & 91.48{\tiny \ $\pm$ 0.19} & 50.98{\tiny \ $\pm$ 3.42} & 89.26{\tiny \ $\pm$ 0.84} & 68.27{\tiny \ $\pm$ 0.48} \\
    & & OE & 3.95{\tiny \ $\pm$ 0.23} & 98.56{\tiny \ $\pm$ 0.07} & 94.67{\tiny \ $\pm$ 0.23} & 47.04{\tiny \ $\pm$ 0.73} & 89.35{\tiny \ $\pm$ 0.21} & 75.01{\tiny \ $\pm$ 0.13} \\
    & & OE + MaxLogit & \underline{3.61}{\tiny \ $\pm$ 0.24} & \textbf{98.62}{\tiny \ $\pm$ 0.06} & 94.67{\tiny \ $\pm$ 0.23} & 46.92{\tiny \ $\pm$ 0.75} & \textbf{90.79}{\tiny \ $\pm$ 0.26} & 75.01{\tiny \ $\pm$ 0.13} \\
    % OE (Logit) {\scriptsize\cite{hendrycks2018oe}} & 3.57{\tiny \ $\pm$ 0.19} & 98.63{\tiny \ $\pm$ 0.05} & 94.72{\tiny \ $\pm$ 0.21} & 46.74{\tiny \ $\pm$ 0.69} & 90.88{\tiny \ $\pm$ 0.27} & 74.94{\tiny \ $\pm$ 0.17} \\
    \rowcolor{\rowbg} \cellcolor{white} & \cellcolor{white} & \textbf{MVOL} {\scriptsize(ours)} & \textbf{3.34}{\tiny \ $\pm$ 0.20} & \underline{98.61}{\tiny \ $\pm$ 0.06} & \underline{94.68}{\tiny \ $\pm$ 0.20} & \textbf{36.62}{\tiny \ $\pm$ 1.36} & \underline{90.37}{\tiny \ $\pm$ 0.43} & 76.27{\tiny \ $\pm$ 0.33} \\
    \bottomrule
    \end{tabular}%
    }
  \label{mainresults}%
\end{table*}%

\noindent\textbf{(1) Detailed analysis.} Based on our data model, type II categories usually exhibit much larger logits than type I categories. \textbf{(i)} $S^{(t)}_{j, II}$ is generally associated with type II categories. By uniformly setting their labels to the threshold $\epsilon$, our learning objective can moderate the logits of these categories to ensure a smoother logit response, based on the insight that a well-calibrated model should not display significantly distinct activation values since minor ID features often have smaller coefficients. 
This allows our learning objective to adaptively adjust the logit values for type II categories rather than blindly suppressing them via aligning them to logits of type I categories. \textbf{(ii)} $S^{(t)}_{j, I}$ is often associated with type I categories. By setting $p_{j, i}$ $i \in S_{j, I}^{(t)}$ as the softmax-normalized value itself, the gradient weight in Equation (5) can be written as a non-positive value \((\sum_{n=1}^{k} p_{j,n}^{(t)} - 1)p_{j, i}\), where the magnitude is proportional to $p_{j, i}$. Here, a lower $p_{j, i}$ leads to a smaller weight, thus contributing less to parameter updates. Thus, our $\mathcal{L}_\text{MVOL}$ effectively reduces the excessive focus on noise. 
More analysis is in Appendix E.

% \noindent\textbf{(2) Robustness to ID Noise in Wild Auxiliary OOD Datasets.}
\noindent\textbf{(2) Adaptability to ID noise in wild datasets.}
%Wild auxiliary OOD datasets could contain ID data as noise. Training models with wild auxiliary OOD datasets is realistic as, in practice, we usually collect auxiliary OOD datasets from the Internet, and ID data might be included as noises. Our FGP can demonstrate good adaptability to these ID noises. In general, FGP's adaptability can be attributed to its self-adaptive way of controlling the logit response of models to ID features. In particular, facing ID samples in auxiliary OOD datasets, models tend to have an extremely larger activated logit on their ground truth labels than others. This causes the coefficients \(\sum_{j=1}^{k} p_j^{(t)}\) in Equation (5) to be small. Then the model has less adjustment on corresponding logit values instead of blindly suppressing all ID features in auxiliary OOD datasets. Thus, FGP could own robustness to ID noise in auxiliary OOD datasets. 
% Wild datasets could contain ID data as noise \cite{jingkang2021iccvwild,katz2022wilds}, which features more ID attributes than auxiliary OOD datasets.
When auxiliay datasets contain ID data as noise, i.e., wild datasets \cite{jingkang2021iccvwild,katz2022wilds}, eliminating all ID noise is labor-intensive.
% Training models with wild auxiliary OOD datasets is realistic as, in practice, auxiliary OOD datasets are usually collected from the Internet, where ID data might be included by accident. 
% Eliminating all ID noise is labor-intensive. 
% Our FGP exhibits strong adaptability to these ID noises. Generally, the adaptability of FGP can be attributed to its self-adaptive mechanism that moderates the logit responses of models to ID features.
Benefiting from explicitly utilizing ID attributes in outliers, our learning objective demonstrates strong adaptability to ID data in wild datasets. 
% largely due to its self-adaptive mechanism, moderating models' logit responses to ID features reasonably.
In specific, when encountering ID data with ground truth label $y$ in wild datasets, models tend to produce a significantly larger logit for the category $y$ than the others. This reaction causes the coefficient \(\sum_{n=1}^{k} p_{j,n}^{(t)}\) in Equation (\ref{updateofFGP}) to be small, reducing its impact on the logit of $y$, rather than blindly suppressing it. 
Thus, our learning objective can generalize to ID data in wild datasets. Relevant experiments are in Table \ref{noise oe}.

\begin{table*}[ht]
  \centering
  \caption{Main results on OOD detection with wild datasets (a larger $\alpha$ means more ID noise). WOODS is for reference.}
    {\small
    \tabcolsep=0.25cm
    % \begin{tabular}{p{1cm}p{1.95cm}p{1.25cm}p{1.25cm}p{1.25cm}p{1.25cm}p{1.25cm}p{1.25cm}}
    \begin{tabular}{l|l|lll|lll}
    \toprule
    \multirow{2}[0]{*}{\textbf{Noise}} &\multirow{2}[0]{*}{\textbf{Method}} & \multicolumn{3}{c|}{\underline{\ \ \ \textbf{Single Model Setting}}\ \ \ } & \multicolumn{3}{c}{\underline{\ \ \ \textbf{Ensemble Distillation Model Setting}\ \ \ }} \\
          %\cmidrule(lr){3-5}\cmidrule(lr){6-8}
           & & FPR95 $\downarrow$ & AUROC $\uparrow$& ID-Acc $\uparrow$   & FPR95 $\downarrow$ & AUROC $\uparrow$& ID-Acc $\uparrow$ \\
    \midrule
    -& MaxLogit  & 47.39{\tiny \ $\pm$ 2.93} & 89.81{\tiny \ $\pm$ 0.55} & 91.38{\tiny \ $\pm$ 0.24} & 40.59{\tiny \ $\pm$ 2.50} & 90.21{\tiny \ $\pm$ 0.71} & 91.94{\tiny \ $\pm$ 0.09} \\ 
    \cmidrule{1-8}
    
    % & & &    \multicolumn{2}{c}{$\pi = 0$} &  \\
    \multirow{3}[0]{*}{$\alpha = 0$}& WOODS & \textcolor[RGB]{128,128,128}{21.97{\tiny \ $\pm$ 1.83}} & \textcolor[RGB]{128,128,128}{96.02{\tiny \ $\pm$ 0.32}} & \textcolor[RGB]{128,128,128}{91.20{\tiny \ $\pm$ 0.22}} & \textcolor[RGB]{128,128,128}{51.50{\tiny \ $\pm$ 3.99}} & \textcolor[RGB]{128,128,128}{84.85{\tiny \ $\pm$ 0.99}} & \textcolor[RGB]{128,128,128}{89.79{\tiny \ $\pm$ 0.19}}\\
    & OE + MaxLogit & 18.04{\tiny \ $\pm$ 1.34} & \textbf{96.57}{\tiny \ $\pm$ 0.15} & \textbf{92.24}{\tiny \ $\pm$ 0.08} & 18.86{\tiny \ $\pm$ 2.10} & 96.12{\tiny \ $\pm$ 0.31} & \textbf{92.32}{\tiny \ $\pm$ 0.15} \\
    \rowcolor{\rowbg} \cellcolor{white} &\textbf{MVOL} {\scriptsize(ours)} & \textbf{17.34}{\tiny \ $\pm$ 2.86} & 96.21{\tiny \ $\pm$ 0.30} & 91.71{\tiny \ $\pm$ 0.08} & \textbf{12.96}{\tiny \ $\pm$ 0.95} & \textbf{96.45}{\tiny \ $\pm$ 0.14} & 91.96{\tiny \ $\pm$ 0.13} \\
    \cmidrule{1-8}
    
    % & & &    \multicolumn{2}{c}{$\pi = 0.05$} &  \\
    \multirow{3}[0]{*}{$\alpha = 0.05$} 
    &WOODS & \textcolor[RGB]{128,128,128}{22.04{\tiny \ $\pm$ 2.36}} & \textcolor[RGB]{128,128,128}{96.02{\tiny \ $\pm$ 0.34}} & \textcolor[RGB]{128,128,128}{91.27{\tiny \ $\pm$ 0.16}} & \textcolor[RGB]{128,128,128}{51.49{\tiny \ $\pm$ 3.97}} & \textcolor[RGB]{128,128,128}{84.86{\tiny \ $\pm$ 0.99}} & \textcolor[RGB]{128,128,128}{89.79{\tiny \ $\pm$ 0.19}}\\
    & OE + MaxLogit & 22.11{\tiny \ $\pm$ 1.26} & 95.64{\tiny \ $\pm$ 0.26} & 91.29{\tiny \ $\pm$ 0.20} & 23.11{\tiny \ $\pm$ 3.90} & 95.67{\tiny \ $\pm$ 0.48} & 91.51{\tiny \ $\pm$ 0.18} \\
    \rowcolor{\rowbg} \cellcolor{white} &\textbf{MVOL} {\scriptsize(ours)} & \textbf{19.55}{\tiny \ $\pm$ 1.04} & \textbf{96.22}{\tiny \ $\pm$ 0.16} & \textbf{91.75}{\tiny \ $\pm$ 0.23} & \textbf{12.87}{\tiny \ $\pm$ 1.11} & \textbf{96.49}{\tiny \ $\pm$ 0.11} & \textbf{91.74}{\tiny \ $\pm$ 0.30} \\
    \cmidrule{1-8}
    
    % & & &    \multicolumn{2}{c}{$\pi = 0.1$} &   \\ 
    \multirow{3}[0]{*}{$\alpha = 0.1$} & WOODS & \textcolor[RGB]{128,128,128}{22.38{\tiny \ $\pm$ 2.30}} & \textcolor[RGB]{128,128,128}{95.98{\tiny \ $\pm$ 0.35}} & \textcolor[RGB]{128,128,128}{91.27{\tiny \ $\pm$ 0.21}} & \textcolor[RGB]{128,128,128}{51.59{\tiny \ $\pm$ 4.03}} & \textcolor[RGB]{128,128,128}{84.85{\tiny \ $\pm$ 0.98}} & \textcolor[RGB]{128,128,128}{89.81{\tiny \ $\pm$ 0.19}}\\
    & OE + MaxLogit & 25.49{\tiny \ $\pm$ 1.60} & 94.97{\tiny \ $\pm$ 0.30} & 90.92{\tiny \ $\pm$ 0.31} & 26.90{\tiny \ $\pm$ 3.04} & 95.24{\tiny \ $\pm$ 0.41} & 91.01{\tiny \ $\pm$ 0.22} \\
    \rowcolor{\rowbg} \cellcolor{white} & \textbf{MVOL} {\scriptsize(ours)} & \textbf{18.05}{\tiny \ $\pm$ 1.58} & \textbf{96.16}{\tiny \ $\pm$ 0.20} & \textbf{91.55}{\tiny \ $\pm$ 0.31} & \textbf{13.96}{\tiny \ $\pm$ 0.80} & \textbf{96.07}{\tiny \ $\pm$ 0.15} & \textbf{91.64}{\tiny \ $\pm$ 0.28} \\
    \cmidrule{1-8}

    \multirow{3}[0]{*}{$\alpha = 0.3$}& WOODS & \textcolor[RGB]{128,128,128}{22.03{\tiny \ $\pm$ 2.45}} & \textcolor[RGB]{128,128,128}{95.99{\tiny \ $\pm$ 0.40}} & \textcolor[RGB]{128,128,128}{91.24{\tiny \ $\pm$ 0.19}} & \textcolor[RGB]{128,128,128}{51.58{\tiny \ $\pm$ 4.03}} & \textcolor[RGB]{128,128,128}{84.86{\tiny \ $\pm$ 0.99}} & \textcolor[RGB]{128,128,128}{89.80{\tiny \ $\pm$ 0.21}}\\
    % & & &    \multicolumn{2}{c}{$\pi = 0.3$} &  \\
    & OE + MaxLogit & 37.67{\tiny \ $\pm$ 3.85} & 92.64{\tiny \ $\pm$ 0.44} & 89.04{\tiny \ $\pm$ 0.30} & 51.20{\tiny \ $\pm$ 6.17} & 91.83{\tiny \ $\pm$ 0.57} & 88.60{\tiny \ $\pm$ 0.13} \\
    \rowcolor{\rowbg} \cellcolor{white} & \textbf{MVOL} {\scriptsize(ours)} & \textbf{20.71}{\tiny \ $\pm$ 1.86} & \textbf{95.94}{\tiny \ $\pm$ 0.32} & \textbf{91.24}{\tiny \ $\pm$ 0.16} & \textbf{13.79}{\tiny \ $\pm$ 0.93} & \textbf{96.43}{\tiny \ $\pm$ 0.16} & \textbf{91.76}{\tiny \ $\pm$ 0.07} \\
    \cmidrule{1-8}
    
    % & & &    \multicolumn{2}{c}{$\pi = 0.5$} &  \\
    \multirow{3}[0]{*}{$\alpha = 0.5$} & WOODS & \textcolor[RGB]{128,128,128}{\textbf{22.31}{\tiny \ $\pm$ 3.05}} & \textcolor[RGB]{128,128,128}{\textbf{95.93}{\tiny \ $\pm$ 0.50}} & \textcolor[RGB]{128,128,128}{\textbf{91.28}{\tiny \ $\pm$ 0.12}} & \textcolor[RGB]{128,128,128}{51.49{\tiny \ $\pm$ 3.96}} & \textcolor[RGB]{128,128,128}{84.86{\tiny \ $\pm$ 0.99}} & \textcolor[RGB]{128,128,128}{89.80{\tiny \ $\pm$ 0.20}}\\
    & OE + MaxLogit & 45.14{\tiny \ $\pm$ 2.94} & 90.22{\tiny \ $\pm$ 0.63} & 88.04{\tiny \ $\pm$ 0.18} & 54.23{\tiny \ $\pm$ 3.77} & 89.93{\tiny \ $\pm$ 0.21} & 86.80{\tiny \ $\pm$ 0.21} \\
    \rowcolor{\rowbg} \cellcolor{white} & \textbf{MVOL} {\scriptsize(ours)} & 25.53{\tiny \ $\pm$ 1.23} & 95.23{\tiny \ $\pm$ 0.18} & 90.79{\tiny \ $\pm$ 0.21} & \textbf{14.85}{\tiny \ $\pm$ 1.86} & \textbf{96.44}{\tiny \ $\pm$ 0.32} & \textbf{91.81}{\tiny \ $\pm$ 0.19} \\
    \bottomrule
    \end{tabular}%
    }
  \label{noise oe}%
\end{table*}

\section{Experiments}
% The single model and ensemble distillation model, described in the `Preliminaries on Multi-view Data Model' section, serve as the respective training settings for our comprehensive evaluation. Note that the single model setting aligns with the commonly used training setup in OOD detection.

\subsection{Experimental Setup}
\label{Experimental Setup}
% Here we present several critical parts of experimental setups.
\textbf{Datasets.}~\textbf{(1)}~
Following the common benchmarks in literature, we use CIFAR-10 and CIFAR-100 as ID datasets, and six diverse OOD test sets
(details in Appendix E). 
% We use CIFAR-10 and CIFAR-100 as ID datasets because 1) MVOL extends the original MVDM in \cite{allen-zhu2023towards}, experimentally validated by CIFAR. Subsequent works, e.g., \cite{zou2023benefits}, also employ CIFAR in experiments, 2) In OOD detection, CIFAR are common ID datasets for validation such as OE, Energy, and VOS. 
The Tiny Images dataset used in \cite{hendrycks2018oe,liu2020energy} for auxiliary outliers has been withdrawn due to its ethical wrong.
% offensive images and using derogatory terms as categories
Instead, we use 300K RandomImages, a cleaned subset of the Tiny Images dataset by \cite{hendrycks2018oe}, as done in \cite{katz2022wilds}.  
\textbf{(2)}~For experiments with wild datasets, we partition CIFAR-10 into 2 halves: one provides ID training data, and the other provides auxiliary noisy ID data. CIFAR-100 provides auxiliary outliers. The auxiliary ID and OOD data are mixed with varied noise levels $\alpha$, \textit{i.e.}, 0, 0.05, 0.1, 0.3, 0.5, and the total images in auxiliary datasets are maintained at 50000. This setup allows simulating the scenarios, where auxiliary OOD datasets contain considerable noise yet most are still outliers. 

\noindent\textbf{Evaluation Metrics.}
We evaluate methods using common metrics, averaged on six OOD test sets, i.e.,
% They are the area under the receiver operating characteristic curve (AUROC) and the false positive rate with a $95\%$ true positive rate (FPR95). A false positive means an OOD data point identified as an ID. We also report ID classification accuracy (ID-Acc) because a good OOD detector should not degrade that much.
the false positive rate of declaring OOD examples as ID when 95\% of ID data points are declared as ID (FPR95) and the area under the receiver operating characteristic curve (AUROC).

\noindent\textbf{OOD Detection Baselines.}
We compare MVOL with comprehensive baselines. \textbf{(1)} Post-Hoc based: MSP \cite{hendrycks2017MSP}, MaxLogit\cite{hendrycks2022maxlogit}, ODIN \cite{shiyu2018odin}, Mahalanobis \cite{lee2018MAHA}, and Energy \cite{liu2020energy}; \textbf{(2)} outlier synthesis-based: VOS \cite{VOS} and ATOL \cite{ATOL}, \textbf{(3)} outlier exposure-based: OE \cite{hendrycks2018oe} and Energy \cite{liu2020energy}. Moreover, OE with MaxLogit as the OOD score, dubbed OE + MaxLogit, is our most relevant baseline. In wild-dataset settings, the SOTA WOODS is \cite{katz2022wilds} for reference. 

% \noindent\textbf{Training Details.} \textbf{(1)} Following representative works in OOD detection (e.g., OE, Energy, and WOODS) and the original MVDM in \cite{allen-zhu2023towards}, we use the Wide ResNet with 40 layers and a widen factor of 2 for all experiments. \textbf{(2)} The single model and ensemble distillation model setting, referred to \cite{allen-zhu2023towards}, mainly differ in labels for ID classification. (i) In the single model setting, single neural network is trained with one-hot labels and cross entropy loss. (ii) In the ensemble distillation model setting, a single network is trained with soft labels generated by ensemble models. We use the ensemble distillation algorithm in the original knowledge distillation work \cite{hinton2015distilling}. More details are put in Appendix.
\noindent\textbf{Training Details.} \textbf{(1)} We use the Wide ResNet with 40 layers and a widen factor of 2 for all experiments. \textbf{(2)} Involved two experimental settings: (i) \textit{Single model}: single neural network is trained with one-hot labels and cross entropy loss. (ii) \textit{Ensemble distillation model}: a single network is trained with the standard ensemble distillation algorithm in \cite{hinton2015distilling}, where soft labels are generated by ensemble models. 
More details are in Appendix.

\subsection{Results}
\label{section_mainresults}
\textbf{Main results on OOD detection.}
In this experiment, Outlier Exposure-based methods utilize auxiliary OOD datasets.
Through comparisons in Table \ref{mainresults}, we have four main findings. \textbf{(1)} OE + MaxLogit outperforms OE (MSP as OOD scoring function), usually serving as the best baseline. This supports our analysis with our extended MVDM, \textit{i.e.}, OE performs logit calibration on outliers. \textbf{(2)} Our MVOL obtains an overall stronger performance than other methods. In the ensemble distillation model setting, MVOL can reduce FPR95 by $10.3\%$ on CIFAR-100, with a slight decrease of 0.04\% in AUROC. \textbf{(3)} With CIFAR-100 as an ID dataset, compared to MVOL in the single model setting, MVOL in the ensemble distillation model setting can reduce FPR95\% by 6.34\%. This result supports the comparison of Theorem 1 and 2. It also indicates the superiority of MaxLogit when models learns all features.
There is a small improvement on CIFAR-10. The reason could be that a single model learns more features on CIFAR-10 than CIFAR-100, as CIFAR-10 has more samples per class \cite{allen-zhu2023towards}.
% This could be attributed to models learning more features on CIFAR-10 than on CIFAR-100, given that CIFAR-10 is a much larger dataset than CIFAR-100 and a single model is more likely to learn more features when the training dataset is sufficiently large \cite{allen-zhu2023towards}.
\textbf{(4)} Across both settings on two ID datasets, MVOL often degrades less ID accuracy than other OE-based methods.

% \noindent\textbf{Analysis on MVOL utilizing ID attributes in outliers.}
Furthermore, we analyze that MVOL can utilize ID attributes in outliers as follows.
% We further verify that MVOL rationally handles intrinsic ID attributes in outliers to improve OOD detection. 
\textbf{(1)} In Figure \ref{vis} (a), compared to OE + MaxLogit, MVOL not only preserves ID features on OOD data but also mitigates overemphasizing noise. It verified our key insight shown in Figure \ref{assuption behind UniP and FGP} (b). 
\textbf{(2)} 
% In Table \ref{mainresults}, we observe that MVOL generally exhibits greater superiority in FPR95 compared to AUROC. Based on this observation, we further analyze the distinctive features of MVOL, particularly its focus on ID attributes in outliers. Compared to OE + MaxLogit, MVOL achieves a lower FPR when the TPR is high, \textit{e.g.}, 0.90, as illustrated in Figure \ref{vis} (b). This suggests that MVOL has better discriminability of ID data against out-of-distribution (OOD) data, as a high TPR indicates a high recall of ID samples. From a unified perspective on ID and OOD, the logit response to OOD data would affect that to ID data. This superiority could be attributed to the fact that MVOL does not blindly suppress the logit response to ID attributes in outliers. Instead, MVOL adaptively calibrates logits on outliers, contributing to the preservation of the logit response to ID data and improving discriminability to OOD with MaxLogit as an OOD score. Furthermore, OE + MaxLogit's smaller FPR when TPR is low could be explained by its overfitting to a subset of ID samples against OOD, which increases its AUROC metric.
In Table \ref{mainresults}, we observe that MVOL generally shows greater superiority in FPR95 than AUROC. 
Based on this, we further analyze MVOL focus on ID attributes in outliers. 
In Figure \ref{vis} (b), compared to OE + MaxLogit, MVOL achieves a lower FPR at a high TPR, e.g., 0.95. 
This suggests that MVOL has better discriminability of ID against OOD, as a high TPR means a high recall of ID samples. 
In a unified perspective on ID and OOD, logit responses to OOD data would affect ID data.
This discriminability could due to MVOL not blindly suppressing logit responses to ID attributes in outliers. 
Instead, MVOL adaptively calibrates these logits. As a result, it reserves logit responses to ID data and improves discriminability with MaxLogit. 
% Furthermore, OE + MaxLogit's smaller FPR, when TPR is small, could be explained as its overfitting to a subset of ID data against OOD, which increases its AUROC. 
In addition, the smaller FPR of OE + MaxLogit, when the TPR is low, could be due to overfitting a subset of ID data against OOD, increasing AUROC.

\begin{figure}[htbp]
\centering
\includegraphics[scale=0.35]{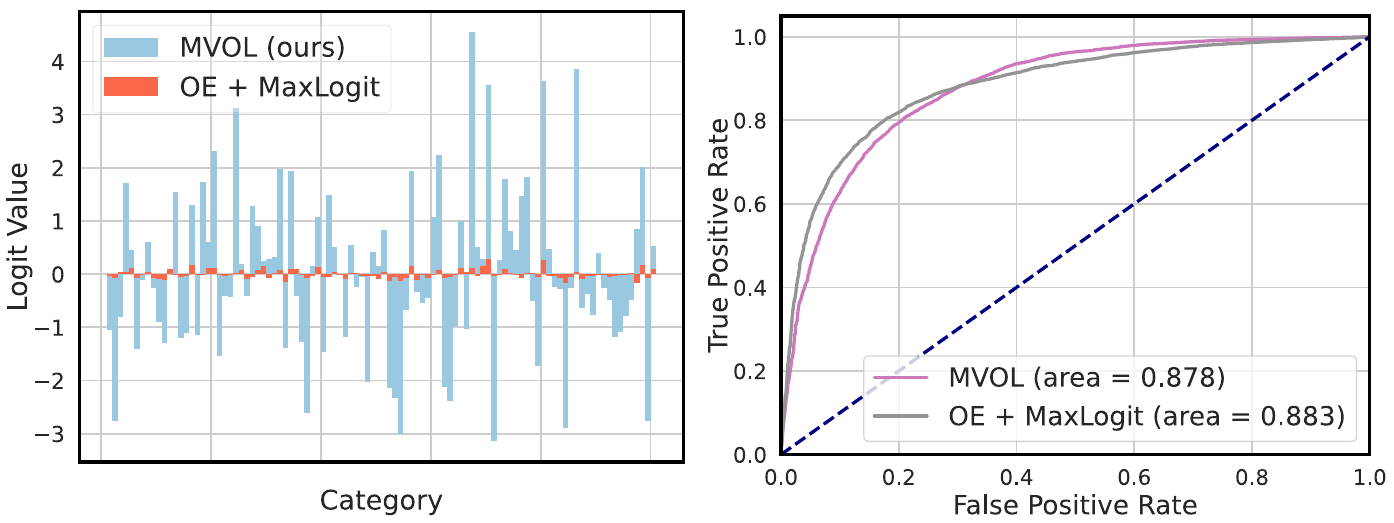}
\caption{
MVOL and OE + MaxLogit with CIFAR-100 as ID dataset. (a) Logits of a training OOD sample, (b) ROC curve with Textures as the test OOD dataset.
}
\label{vis}
\end{figure}

% \textbf{(1) Comparing the OOD detection results in two respective settings}, we have three main observations. \textbf{(i)} With confidence loss as the OOD objective, OE + MaxLogit outperforms OE. This verifies our analysis with our extended MVDM, \textit{i.e.}, CL performs logit calibration on outliers. 
% \textbf{(ii)} Our MVOL often outperforms other competitive baselines. For example, compared with OE + MaxLogit in the ensemble distillation setting, MVOL can reduce FPR95 by $10.3\%$ on CIFAR-100, with a slight decrease of 0.04\% in AUROC. 
% These results suggest that our MVOL could often calibrate the logit responses on OOD data.
% \textbf{(2) Comparing the OOD detection results of MVOL between two settings}, there are two main findings. \textbf{(i)} With CIFAR-100 as an ID dataset, compared to MVOL in the single model setting, MVOL in the ensemble distillation setting can reduce FPR95\% by 6.34\%. This is consistent with our theoretical results in Theorem 1 and 2. \textbf{(ii)} There is a small improvement on CIFAR-10. This could be attributed to models learning more features on CIFAR-10 than on CIFAR-100, given that CIFAR-10 is a much larger dataset (with 5000 images per class) than CIFAR-100 (with 500 images per class) and a single model is more likely to learn more features when the training dataset is sufficiently large \cite{allen-zhu2023towards}. 
\noindent\textbf{Main results on OOD detection with wild datasets.}
This experiment simulates a real scenario where collected auxiliary OOD datasets contain ID data as noise. We use OE + MaxLogit as the baseline for comparison. Because compared to it head-to-head, our MVOL has a new multi-view-based learning objective ($\mathcal{L}_\text{MVOL}$) specially designed to tackle intrinsic ID attributes in outliers. We use wild datasets to verify whether $\mathcal{L}_\text{MVOL}$ could generalize well to noisy ID data in them. Furthermore, we use WOODS as the SOTA baseline in wild-dataset settings only for reference, as it is specifically designed to handle noisy ID data in wild dataset.

We have two main findings in Table \ref{noise oe}.
\textbf{(1)} MVOL shows advantages using wild datasets in both settings. For example, in single model setting, $\mathcal{L}_\text{MVOL}$ reduces FPR95 by 0.7\%, 2.56\%, 7.44\%, 16.96\%, and 19.61\% as alpha increases from 0 to 0.5, showing increasingly clear advantages over OE + MaxLogit, which confirms its generalizability.
\textbf{(2)} MVOL is comparable with the customized WOODS when alpha=0.5 in single model setting. WOODS degrades in ensemble distillation model setting, even worse than MaxLogit (without auxiliary data). The reason could be its inefficacy in benefiting from distilled knowledge of ensemble models.

\section{Conclusion}
This paper explores the intrinsic ID attributes in outliers. We propose a structured multi-view-based out-of-distribution learning framework to handle these attributes rationally. We provide theoretical analysis on it. Extensive experiments on auxiliary OOD datasets and even wild datasets show its efficacy. Through our unified perspective on ID and OOD, 
we probably calibrate logits even without auxiliary outliers, reducing computation costs. We leave it as future works.

\section{Acknowledgments}
This work is supported by the National Natural Science Foundation of China (NSFC) under Grant No. 62476049.

\bibliography{aaai}

\newpage
\appendix
\onecolumn

\section{A. Full Version of Data Distribution and Notations}
\label{Full Version of Data Distribution and Notations}

Here, we provide the full version of in-distribution and out-of-distribution definitions, refered to \cite{allen-zhu2023towards}. 

We consider OOD detection in k-class classification scenarios. Each input is represented by $P$ patches $X = (x_1,x_2,\dots, x_P) \in \mathbb {R}^{d \times P}$, where each patch has dimension $d$. For ID data, labels belong to $[k]$. We use \( \widetilde{O}, \widetilde{\Omega}, \widetilde{\Theta} \) notations to hide polylogarithmic factors in \( k \). It is assumed that there are multiple associated features for each label class \( j \in [k] \), say two features for simplicity, represented by unit feature vectors \( v_{j,1}, v_{j,2} \in \mathbb{R}^d \). We denote by:
\[\mathcal{V} \stackrel{\text{def}}{=} \{v_{j,1}, v_{j,2} \mid j \in [k]\} \text{ the set of all features.}\]
Let $C_p$ be a global constant, $s \in [1, k^{0.2}]$ be a global parameter to control feature sparsity, $\sigma_p = \frac{1}{\sqrt{d}\operatorname{polylog}(k)}$ be a parameter to control magnitude of the \textit{random noise}, $\gamma = \frac{1}{k^{1.5}}$ controls the \textit{feature noise}. In this detailed definition, we allow "noise" to be any feature noise plus Gaussian noise.
\begin{Def}[data distributions \(D_m^{in}\) and \(D_s^{in}\)] Given \(D \in \{D_m^{in}, D_s^{in}\}\), we define \((X^{in}, y) \sim D\) as follows. First choose the label \(y \in [k]\) uniformly at random. Then, the data vector \(X^{in}\) is generated as follows.\\
\textbf{1.} Denote \(\mathcal{V}(X^{in}) = \{v_{y,1}, v_{y,2}\} \cup \mathcal{V}'\) as the set of feature vectors used in this data vector \(X^{in}\), where \(\{v_{y,1},v_{y,2}\}\) are \textbf{main ID features} and \(\mathcal{V}'\) is a set of \textbf{minor ID features} uniformly sampled from \(\{v_{j,1}, v_{j,2}\}_{j \in [k] \setminus \{y\}}\), each with probability \(\frac{s}{k}\).\\
{\footnotesize\(\diamond\) comment: \((X^{in}, y)\) shall be primarily supported on two main features \(v_{y,1}, v_{y,2}\) and \(\sim O(s)\) minor features}\\
\textbf{2.} For each \(v \in \mathcal{V}(X^{in})\), pick \(C_p\) many disjoint patches in \([P]\) and denote it as \(P_v(X^{in}) \subset [P]\) (the distribution of these patches can be arbitrary). We denote \(\mathcal{P}(X^{in}) = \bigcup_{v \in \mathcal{V}(X^{in})} P_v(X^{in})\).\\
{\footnotesize\(\diamond\) comment: the weights of \(X^{in}\) on each feature \(v\) shall be written on patches in \(P_v(X^{in})\)}\\
\textbf{3.} If \(D = D_s^{in}\) is the single-view distribution, pick a value \(\hat{\ell} = \hat{\ell}(X^{in}) \in [2]\) uniformly at random.\\
\textbf{4.} For each \(v \in \mathcal{V}(X^{in})\) and \(p \in P_v(X^{in})\), we set
\[
x_p = z_p v + \sum_{v' \in \mathcal{V}} \alpha_{p,v'} v' + \xi_p \in \mathbb{R}^d
\]
Above, each \(\alpha_{p,v'} \in [0, \gamma]\) is the feature noise, and \(\xi_p \sim \mathcal{N}(0, \sigma_p^2 I)\) is an (independent) random Gaussian noise. The coefficients \(z_p \geq 0\) satisfy that:\\
In the case of multi-view distribution \(D = D_m^{in}\),\\
\(\sum_{p \in P_v(X)} z_p \in [1, O(1)]\) when \(v \in \{v_{y,1}, v_{y,2}\}\),\\
\(\sum_{p \in P_v(X)} z_p \in [\Omega(1), 0.4]\) when \(v \in \mathcal{V}(X) \setminus \{v_{y,1}, v_{y,2}\}\),\\
\(\diamond\) comment: total weights on features \(v_{y,1}, v_{y,2}\) are larger than those on minor features \(\mathcal{V}(X) \setminus \{v_{y,1}, v_{y,2}\}\).\\
In the case of single-view distribution \(D = D_s^{in}\),\\
\(\sum_{p \in P_v(X)} z_p \in [1, O(1)]\) when \(v = v_{y, \hat{\ell}}\),\\
\(\sum_{p \in P_v(X)} z_p \in [\rho, O(\rho)]\) when \(v = v_{y, 3 - \hat{\ell}}\) \(\diamond\) comment: we consider \(\rho = k^{-0.01}\) for simplicity,\\
\(\sum_{p \in P_v(X^{in})} z_p \in [\Omega(\Gamma), \Gamma]\) when \(v \in \mathcal{V}(X^{in}) \setminus \{v_{y,1}, v_{y,2}\}\). {\footnotesize we consider \(\Gamma = \frac{1}{\text{polylog}(k)}\) for simplicity.}\\
{\footnotesize\(\diamond\) comment: total weight on feature \(v_{y, \hat{\ell}}\) is much larger than those on \(v_{y, 3 - \hat{\ell}}\) or minor features.}\\
\textbf{5.} For each \(p \in [P] \setminus P(X^{in})\), we set:
\[
x_p = \sum_{v' \in \mathcal{V}} \alpha_{p,v'} v' + \xi_p
\]
where \(\alpha_{p,v'} \in [0, \gamma]\) is the feature noise and \(\xi_p \sim \mathcal{N}(0, \frac{\gamma^2 k^2}{d} \textbf{I})\) is (independent) Gaussian noise.
\end{Def}

\begin{Def}[$D^{in}$ and $Z^{in}$]
The final In-distribution $D^{in}$ consists of data from $D_m^{in}$ w.p. $\mu$ and $D_s^{in}$ w.p. $1-\mu$. We are given $N$ training samples from $D^{in}$, and denote the training data set as $Z^{in} = Z_m^{in} \cup Z_s^{in}$ where $Z_m^{in}$ and $Z_s^{in}$ respectively represent multi-view and single-view training data. We write $(X^{in}, y) \sim Z^{in}$ as $(X^{in}, y)$ sampled uniformly at random from the empirical data set, and denote $N_s = |Z_s^{in}|$. We again for simplicity focus on the setting when \(\mu = \frac{1}{\text{poly}(k)}\) and we are given samples \(N = k^{1.2}/\mu\) so each label \(i\) appears at least \(\Omega(1)\) in \(Z_s^{in}\).
\end{Def}

\begin{Def}[Out-of-distribution $D^{out}$]We define $X^{out} \sim D^{out}$ as follows. The data vector $X^{out}$ is generated as follows:\\
    \textbf{1}. Denote $\mathcal{V}(X^{out})$ as the set of \textbf{minor ID feature} vectors used in this data vector $X^{out}$, which are uniformly sampled from $\{v_{j , 1}, v_{j, 2}\}_{{j}\in{[k]}}$, each with probability $\frac{s}{k}$. \\
    \textbf{2)}. For each $v \in \mathcal{V}(X^{out})$, pick $C_p$ many disjoint patches in $[P]$ and denote it as $P_v(X^{out}) \subset [P]$ (the
    distribution of these patches can be arbitrary). We denote $P(X^{out}) = \bigcup_{v\in \mathcal{V}(X^{out})} \mathcal{P}_v(X)$.\\
    $\cdot$  {\footnotesize comment: the weights of $X^{out}$ on each feature $v$ shall be written on patches in $\mathcal{P}_v(X^{out})$ }\\
    \textbf{3)}. For each $v \in \mathcal{V}(X^{out})$ and $p \in \mathcal{P}_v(X^{out})$, we set 
    \[ x_p = z_p v + \sum_{v^{'} \in \mathcal{V}} a_{p,v^{'}} v' + \xi_p \in \mathbb{R}^d \]
    Above, each $a_{p,v^{'}} \in [0,\gamma]$ is the feature noise, and $\xi_p \sim \mathcal{N}(0, \sigma_p^2\mathbf{I})$ is an (independent) random Gaussian noise. The coefficients $z_p \geq 0$ satisfy that:
    \[\sum_{p \in \mathcal{P}_v(X^{out})} z_p \in [\Omega(1), 0.4]\]
    \textbf{4)}. For each $p \in[\mathcal{P}] \backslash \mathcal{P}(X)$, we set:
    \[
    x_p = \sum_{v' \in \mathcal{V}} a_{p,v'} v' + \xi_p
    \]
\end{Def}

\begin{Def}[$Z^{out}$]
We are given M auxiliary OOD training samples from $D^{out}$, and denote the training data set as $Z^{out}$. We write $X^{out} \sim Z^{out}$ as $X^{out}$ sampled uniformly at random from the empirical data set. We are given samples $M \geq N$ to represent a large auxiliary OOD dataset.
\end{Def}

\begin{Def}[$\widetilde{\text{ReLU}}$]
For integer \( q \geq 2 \) and threshold \( \lambda = \frac{1}{\text{polylog}(k)} \), the smoothed activation function is defined as:
\end{Def}
\[
    \widetilde{\text{ReLU}}(z) = 
    \begin{cases} 
    0 & \text{if } z \leq 0, \\
    \frac{z^q}{q \lambda^{q-1}} & \text{if } z \in [0, \lambda], \\
    z - \left(1 - \frac{1}{q}\right) \lambda & \text{if } z \geq \lambda.
    \end{cases}
\]

\section{B. Theoretical Proof}
\label{Theoretical Proof}
\subsection{Preliminary}
We first list some useful conclusions in \citep{allen-zhu2023towards} to complete our proof.

\subsubsection{Single Model Setting.}
The proof relies on an induction hypothesis for iteration $t = 0, 1, 2, \dots, T$. Before stating it, we introduce several notations. Let us denote
\begin{align*}
\Lambda_i^{(t)} \overset{\text{def}}{=}& \max_{r \in [m], \ell \in [2]} [ w_{i,r}^{(t)}, v_{i,\ell} \rangle ]^+ \quad \\ \quad \Lambda_{i,\ell}^{(t)} \overset{\text{def}}{=}& \max_{r \in [m]} [ \langle w_{i,r}^{(t)}, v_{i,\ell} \rangle ]^+
\end{align*}

Suppose $m \leq \text{poly}(k)$. For every $i \in [k]$, let us denote
\begin{align*}
    \mathcal{M}_i^{(0)} \overset{\text{def}}{=} \Big\{ \in [m] &\mid \exists \ell \in [2] : \\
    &\langle w_{i,r}^{(0)}, v_{i,\ell}\rangle  \geq \lambda_{i,\ell}^{(0)} \left(1 - O(\frac{1}{\log k})\right) \Big\}
\end{align*}

\textbf{Justification.} 
In the single model setting, if a neuron $r \in [m]$ is not in $\mathcal{M}_i^{(0)}$, those neurons $r$ will not learn anything useful for the output label $ i \in [k]$.
\begin{Fact}
With probability at least $1 - e^{-\Omega(\log^5 k)}$, we have $|M_i^{(0)}| \leq m_0 \overset{\text{def}}{=} O(\log^5 k)$.
\end{Fact}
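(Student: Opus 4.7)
Although the excerpt does not spell out the initialization, the referenced multi-view analysis takes $w_{i,r}^{(0)} \sim \mathcal{N}(0, \sigma_0^2 \mathbf{I}_d)$ i.i.d.\ across $(i,r)$, and since $v_{i,\ell}$ is a unit vector the scalar inner products $\zeta_{r,\ell} := \langle w_{i,r}^{(0)}, v_{i,\ell}\rangle$ are, for a fixed $(i,\ell)$, i.i.d.\ $\mathcal{N}(0,\sigma_0^2)$. The entire fact is then a statement about how many among $2m$ i.i.d.\ centered Gaussians can lie in the narrow strip $[(1-\delta)M,\,M]$ just below their maximum (with $\delta = O(1/\log k)$ and $M = \Lambda_{i,\ell}^{(0)}$). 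The plan is a two-step sandwich-then-count argument, closed out by a multiplicative Chernoff bound, followed by a union bound over $\ell \in [2]$.

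\textbf{Step 1: Lower-sandwich $\Lambda_{i,\ell}^{(0)}$.} I will show that, for an explicit $c_- = \sqrt{2}\bigl(1 - O(\log\log k/\log k)\bigr)$, the event $\mathcal{E}_1 := \{\Lambda_{i,\ell}^{(0)} \geq c_-\sigma_0\sqrt{\log m}\}$ has probability $\geq 1 - e^{-\Omega(\log^5 k)}$. This follows from independence and the lower Mills ratio: a single $\zeta_{r,\ell}$ exceeds $c_-\sigma_0\sqrt{\log m}$ with probability at least $m^{-c_-^2/2}/\mathrm{poly}(\log m)$, so the probability that none of $m$ i.i.d.\ copies does is at most $\exp\bigl(-m^{\,1-c_-^2/2}/\mathrm{poly}(\log m)\bigr)$. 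Choosing $c_- = \sqrt{2}(1 - C\log\log k/\log k)$ with $C$ large enough forces $m^{\,1-c_-^2/2} \gg \log^5 k \cdot \mathrm{poly}(\log k)$ whenever $m \leq \mathrm{poly}(k)$, giving the desired tail.

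\textbf{Step 2: Count on $\mathcal{E}_1$ and Chernoff.} On $\mathcal{E}_1$, every $r$ that contributes to $\mathcal{M}_i^{(0)}$ via index $\ell$ automatically satisfies the deterministic threshold $\zeta_{r,\ell} \geq \tau_\ell := (1-\delta)\,c_-\sigma_0\sqrt{\log m}$. Let $N_\ell := |\{r \in [m] : \zeta_{r,\ell} \geq \tau_\ell\}|$; its indicator summands are i.i.d.\ Bernoulli with parameter $p_\ell \leq \exp(-\tau_\ell^2/(2\sigma_0^2)) = m^{-(1-\delta)^2 c_-^2/2}$ by the upper Mills ratio. A short calculation gives $(1-\delta)^2 c_-^2/2 = 1 - O(\log\log k/\log k)$, so $\mathbb{E}[N_\ell] \leq m\, p_\ell \leq m^{\,O(\log\log k/\log k)} = e^{O(\log\log k)} = \mathrm{polylog}(k)$, comfortably below $m_0 = O(\log^5 k)$. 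A multiplicative Chernoff bound then gives $\Pr[N_\ell \geq m_0/2] \leq e^{-\Omega(m_0)} = e^{-\Omega(\log^5 k)}$. Since $\mathcal{M}_i^{(0)} \subseteq \{r : \zeta_{r,1} \geq \tau_1\} \cup \{r : \zeta_{r,2} \geq \tau_2\}$ by the ``$\exists\,\ell\in[2]$'' in the definition, a union bound over $\ell \in [2]$ and over $\mathcal{E}_1$ yields $|\mathcal{M}_i^{(0)}| \leq m_0$ with the claimed probability.

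\textbf{Main obstacle.} The delicate bookkeeping is the joint calibration in Steps 1--2: $c_-$ must be pushed close enough to $\sqrt{2}$ that $(1-\delta)^2 c_-^2/2$ lies within $o(1)$ of $1$, otherwise $m^{\,1-(1-\delta)^2 c_-^2/2}$ degrades from $\mathrm{polylog}(k)$ to a genuine polynomial in $k$, which Chernoff cannot shrink into the $\log^5 k$ budget. Exploiting the tightness of Gaussian max concentration (i.e., that $\sqrt{2\log m}$ is correct up to an additive $O(\log\log m/\sqrt{\log m})$), together with $m \leq \mathrm{poly}(k)$ and $\delta = O(1/\log k)$, is exactly what makes all the slacks mesh. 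Everything else is routine Mills-ratio and Chernoff manipulation.
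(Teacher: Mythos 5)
The paper does not actually prove this Fact: it appears in Appendix~B under the preamble ``We first list some useful conclusions in \citep{allen-zhu2023towards}'' and is imported verbatim from that reference, so there is no in-paper argument to compare yours against. Judged on its own terms, your two-step strategy (lower-sandwich the maximum of $m$ i.i.d.\ Gaussian projections with the required confidence, then Chernoff-count the exceedances of $(1-\delta)$ times that lower bound, union-bounding over $\ell\in[2]$) is the standard and essentially correct route, and it is the same skeleton used in the cited source.

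Two calibration points deserve tightening. First, you tune $c_-$ against $\log k$ rather than $\log m$; this only delivers $m^{1-c_-^2/2}\gg\log^5 k$ in Step~1 when $\log m=\Theta(\log k)$. The paper only assumes $m\le\mathrm{poly}(k)$, so the regime $\log^5 k\ll m\ll k^{o(1)}$ is not covered as written (for $m\le O(\log^5 k)$ the Fact is vacuous); the clean fix is to set the threshold as $\tau=\sigma_0\sqrt{2\log m-\Theta(\log\log k)}$ so that $m\Pr[\zeta>\tau]=\Theta(\log^5 k)$ directly. Second, the phrase ``comfortably below $m_0=O(\log^5 k)$'' misstates the bookkeeping: the very requirement that Step~1 fail with probability $e^{-\Omega(\log^5 k)}$ forces $m\Pr[\zeta>\tau]\ge\Omega(\log^5 k)$, hence $\mathbb{E}[N_\ell]\ge\Omega(\log^5 k)$ as well --- the two occurrences of $\log^5 k$ in the Fact are linked, not coincidental. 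What actually saves the count is the observation you do make, namely that dilating the threshold by $(1-\delta)$ with $\delta=O(1/\log k)$ inflates the tail probability by only $m^{2\delta+o(\delta)}=O(1)$, so $\mathbb{E}[N_\ell]=O(\log^5 k)$ with a constant depending on the hidden constants in the definition of $\mathcal{M}_i^{(0)}$ and in $m\le\mathrm{poly}(k)$; Chernoff at level $\Theta(\mathbb{E}[N_\ell])$ then closes the argument. With those two repairs the proof is sound.
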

% (The proof can be found in \citep{allen-zhu2023towards} Proposition H.1)

Suppose we denote by

\begin{align*}
S_{i,\ell} \overset{\text{def}}{=} \mathbb{E}_{(X,y)\sim Z_m} \Big[\mathbf{1}_{y=i}\sum_{p \in P_{v_{i,l}}(X)} z_p^q\Big] 
\end{align*} 

Then, define  
\begin{align*}
\mathcal{M}  \overset{\text{def}}{=}& \Big\{ (i, \ell^\ast) \in [k] \times [2] \mid \Lambda_{i, \ell^\ast}^{(0)} \geq\\
&\Lambda_{i, 3 - \ell^\ast}^{(0)}\left( \frac{S_{i,3-\ell^\ast}}{S_{i,\ell^\ast}} \right)^{\frac{1}{q-2}} \Big( 1 + \frac{1}{\log^2(m)} \Big) \Big\}
\end{align*}
\textbf{Justification.} If $(i,\ell) \in M$, the feature $v_{i,\ell}$ has a higher chance than $v_{i,3-\ell}$ to be learned by the model.
\begin{Prop}Suppose $m \leq \text{poly}(k)$. We have the following properties about $\mathcal{M}$:\\
    \textbf{$\cdot$} For every $i \in [k]$, at most one of $(i, 1)$ or $(i, 2)$ is in $\mathcal{M}$.\\
    \textbf{$\cdot$} For every $i \in [k]$, suppose $S_{i,e} \geq S_{i,3-\ell}$, then $\Pr\left((i,3-\ell) \in M\right) \geq m^{-O(1)}$.\\
    \textbf{$\cdot$} For every $i \in [k]$, $\Pr\left((i,1) \in M \text{ or } (i,3 - \ell) \in \mathcal{M}\right) \geq 1 - o(1)$.
\end{Prop}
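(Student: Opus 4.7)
The plan is to prove the three clauses separately, exploiting the fact that the defining inequality for membership in $\mathcal{M}$ is a multiplicative threshold on the ratio $\Lambda_{i,\ell^\ast}^{(0)} / \Lambda_{i, 3-\ell^\ast}^{(0)}$, and that at initialization each $\Lambda_{i,\ell}^{(0)}$ is the positive-part maximum of $m$ Gaussian inner products with a fixed unit feature vector. Throughout, I will abbreviate $A_\ell \overset{\text{def}}{=} \Lambda_{i,\ell}^{(0)}$ and $r_i \overset{\text{def}}{=} \bigl(S_{i,2}/S_{i,1}\bigr)^{1/(q-2)}$, and set $\delta \overset{\text{def}}{=} 1 + 1/\log^2 m$.

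For the first clause, the argument is purely deterministic. If both $(i,1)\in \mathcal{M}$ and $(i,2)\in \mathcal{M}$, then by definition $A_1 \geq A_2\, r_i\, \delta$ and simultaneously $A_2 \geq A_1 \, r_i^{-1} \, \delta$. Multiplying these two inequalities gives $A_1 A_2 \geq A_1 A_2 \, \delta^2$, hence $\delta \leq 1$, contradicting $\delta > 1$. So at most one of $(i,1),(i,2)$ lies in $\mathcal{M}$.

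For the second clause, I will use that at initialization $w_{i,r}^{(0)}$ are i.i.d.\ symmetric Gaussians, and for each fixed unit $v_{i,\ell}$ the scalars $\langle w_{i,r}^{(0)}, v_{i,\ell}\rangle$ are i.i.d.\ Gaussian; moreover, since $v_{i,1}\perp v_{i,2}$, the two families across $\ell \in \{1,2\}$ are independent. Hence the pair $(A_1, A_2)$ is an independent product of two maxima-of-$m$-half-Gaussians distributions, which are identically distributed. Fix the hypothesis $S_{i,\ell} \geq S_{i,3-\ell}$, so $r_i$ is bounded between constants (once we substitute the appropriate ratio into the definition of $\mathcal{M}$ for picking $\ell^\ast = 3-\ell$). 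To lower-bound $\Pr((i,3-\ell)\in \mathcal{M})$, I will show that with probability at least $m^{-O(1)}$ the ratio $A_{3-\ell}/A_{\ell}$ exceeds $r_i \cdot \delta$. This follows from standard order-statistics anti-concentration: the probability that the maximum of $m$ half-Gaussians exceeds a given deterministic threshold by a polynomial-in-$m$ factor is at most $m^{-O(1)}$ but also at least $m^{-O(1)}$, and by independence of the two $\ell$-sides one multiplies these bounds. Concretely, since the tail of the maximum decays like a Gumbel law with scale $1/\sqrt{\log m}$, a multiplicative gap of $\delta = 1 + 1/\log^2 m$ in the ratio corresponds to an additive gap of $O(1/(\sqrt{\log m}\,\log^2 m))$ in the Gumbel location, which occurs with probability $m^{-O(1)}$.

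For the third clause, the strategy is exactly to show that the two independent maxima $A_1$ and $A_2$ are never (except on an $o(1)$ event) within the narrow multiplicative window $[\, r_i\delta^{-1},\; r_i\delta\,]$ of each other. This is a one-sided anti-concentration statement about $A_2/A_1$, whose distribution again reduces to comparing two independent maxima-of-$m$-half-Gaussians. Since both maxima concentrate around $\sqrt{2\log m}$ on the scale $1/\sqrt{\log m}$, the density of $\log(A_2/A_1)$ at the relevant point is of order $\sqrt{\log m}$, and the window has width $2\log\delta = O(1/\log^2 m)$; thus the probability that neither ordering exceeds the multiplicative factor $\delta$ is $O(\sqrt{\log m}/\log^2 m) = o(1)$. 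Combined with clause~1, exactly one of $(i,1),(i,2)$ is then in $\mathcal{M}$ with probability $1-o(1)$.

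The main obstacles are the two anti-concentration steps for clauses~2 and~3. The first clause is a one-line contradiction; once the correct Gumbel-scale analysis of $\max_{r\in[m]}[\langle w_{i,r}^{(0)}, v_{i,\ell}\rangle]^+$ is set up, clauses~2 and~3 reduce to a lower and upper bound (respectively) on the probability that two independent copies of this maximum lie on opposite sides of the narrow threshold $\delta$. I will therefore do the Gumbel-scale calculation once, in a lemma comparing $\log A_1 - \log A_2$ to its density at zero, and then apply it in both directions. The only subtle point is that $r_i$ need not be constant; one must verify that the definition of $\mathcal{M}$ still behaves well when $S_{i,1}/S_{i,2}$ is extreme, which by the data model is controlled by the main-feature weight bounds $[\Omega(1), O(1)]$ from Definition~1.
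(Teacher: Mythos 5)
This proposition has no proof in the paper: it appears in the appendix under ``Preliminary'' with the explicit remark that it is a conclusion \emph{listed from} \citet{allen-zhu2023towards}, accompanied only by an informal ``Justification'' comment. So there is nothing in the paper to compare your argument against line by line; your proposal has to be judged on its own, and on those terms it is essentially the right argument (and the natural one, matching the extreme-value-theoretic style of the source). Clause~1 via multiplying the two membership inequalities and cancelling $\Lambda_{i,1}^{(0)}\Lambda_{i,2}^{(0)}$ is correct, with the one caveat that cancellation requires $\Lambda_{i,1}^{(0)}\Lambda_{i,2}^{(0)}>0$; on the measure-$4^{-m}$ event where all $m$ inner products with both features are nonpositive, both $\Lambda$'s vanish and both pairs trivially satisfy $0\ge 0$, so the clause should really carry an ``almost surely'' or high-probability qualifier. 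The reduction of clauses~2 and~3 to the joint law of two independent, identically distributed maxima of $m$ half-Gaussians (independence coming from $v_{i,1}\perp v_{i,2}$) is exactly right.

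Two quantitative points in your sketch are off, though neither breaks the conclusion. For clause~2, the binding constraint is not the $\delta=1+1/\log^2 m$ correction (crossing that costs only an $\Omega(1)$ probability, not $m^{-O(1)}$) but the constant ratio $r_i=(S_{i,\ell}/S_{i,3-\ell})^{1/(q-2)}=\Theta(1)$, which can be a constant strictly larger than $1$; the $m^{-O(1)}$ lower bound then comes from conditioning $A_\ell$ below its median and using the Gaussian tail $\Pr\bigl(A_{3-\ell}\ge C\sigma_0\sqrt{2\ln m}\bigr)\gtrsim m^{1-C^2-o(1)}$ for constant $C$. You do flag the need to control $r_i$ via the $[\Omega(1),O(1)]$ weight bounds, which is the right fix, but your Gumbel-shift sentence addresses the wrong factor. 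For clause~3, the density of $\log(A_1/A_2)$ at any point is $O(\log m)$ (not $O(\sqrt{\log m})$, which is the density of $A$ itself rather than of its logarithm), so the window of width $O(1/\log^2 m)$ yields probability $O(1/\log m)$ rather than your stated $O(\sqrt{\log m}/\log^2 m)$; both are $o(1)$, so the clause still follows, and combined with clause~1 exactly one of $(i,1),(i,2)$ lands in $\mathcal{M}$ with probability $1-o(1)$ as claimed.
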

\textbf{Justification.} With decent probability at least one of $(i, 1)$ or $(i, 2)$ shall be in $\mathcal{M}$. More importantly, the later Induction Hypothesis \ref{induction hypothesisB.1} ensures that, during the entire training process, if $(i, 3 - \ell) \in \mathcal{M}$, then $v_{i,\ell}$ must be missing from the learner network.

\begin{Hypo}
\label{induction hypothesisB.1}
For every \(\ell \in [2]\), for every \(r \in [m]\), for every \((X, y) \in Z_m^{in}\), and for every \(i \in [k]\), or for every \((X, y) \in Z_s^{in}\) and \(i \in [k] \backslash\ \{y\}\):\\
    (a) For every \(p \in P_{v_i,\ell} (X, y)\), we have:
    \(
    \langle w^{(t)}_{i, r}, x_p\rangle = \langle w^{(t)}_{i, r}, v_{i,\ell}\rangle z_p \pm \widetilde{o}(\sigma_0).
    \)\\
    (b) For every \(p \in P(X) \setminus (P_{v_{i,1}} (X, y) \cup P_{v_{i,2}} (X, y))\), we have:
    \(
    |\langle w^{(t)}_{i, r}, x_p \rangle| \leq \widetilde{O}(\sigma_0).
    \)\\
    (c) For every \(p \in [P] \setminus P(X)\), we have:
    \(
    |\langle w^{(t)}_{i, r}, x_p \rangle| \leq \widetilde{O}(\sigma_0 \gamma k).
    \)\\
    In addition, for every $(X,y) \in Z_s^{in}$, every $i \in [k]$, every $r \in [m]$, every $\ell \in [2]$\\
    (d) For every \(p \in P_{v_{i, \ell}}(X, y)\), we have:
    \(
    \langle w^{(t)}_{i, \ell}, x_p \rangle = \langle w^{(t)}_{i, \ell}, x_p\rangle + \langle w^{(t)}_{i, \ell}, x_p\rangle \pm \widetilde{O}(\sigma_0 \gamma k).
    \)\\
    (e) For every \(p \in P_{v_{i, \ell}}(X, y)\), if \((i, 3 - \ell) \in \mathcal{M}\) we have:
    \(
    |\langle w^{(t)}_{i, r}, x_p \rangle| \leq \widetilde{O}(\sigma_0).
    \)\\
    (f) For every \(p \in P_{v_{i, \ell}}(X, y)\), if \(r \in [m] \backslash \mathcal{M}^{(0)}_i\) we have:
    \(
    |\langle w^{(t)}_{i, r}, x_p \rangle| \leq \widetilde{O}(\sigma_0).
    \)\\
    Moreover, we have for every \(i \in [k]\),\\
    (g) \(
    \Lambda^{(t)}_i \geq \Omega(\sigma_0) \quad \text{and} \quad \Lambda^{(t)}_i \leq \widetilde{O}(1).
    \)\\
    (h) For every \(\ell \in [m]\), every \(\ell \in [2]\), it holds that:
    \(
    \langle w^{(t)}_{i, \ell} x_p\rangle \geq -\widetilde{O}(\sigma_0).
    \)\\
    (i) For every \(\ell \in [m] \backslash \mathcal{M}_i^{(0)}\), every \(\ell \in [2]\), it holds that:
    \(
    \langle w^{(t)}_{i, \ell}, v_{i,l}\rangle \leq \tilde{O}(\sigma_0).
    \)
\end{Hypo}

\textbf{Justification.} 
Items (a)-(c) say that when studying the correlation between \(w_{i,r}\) with a multi-view data, or between \(w_{i,r}\) with a single-view data (but \(y \neq i\)), the correlation is about \(\langle w_{i,r}, v_{i,1}\rangle\) and \(\langle w_{i,r}, v_{i,2} \rangle\) and the remaining terms are sufficiently small. Items (d)-(f) say that when studying the correlation between \(w_{i,r}\) with a single-view data \((X, y)\) with \(y = i\), then the correlation also has a significant noise term \(\langle w^{(t)}_{i,r}, \xi_p \rangle\). Items (g)-(i) are some regularization statements.

We denote
\[
\Phi^{(t)}_{i,\ell} \overset{\text{def}}{=} \sum_{r \in [m]} \left[(w^{(t)}_{i,r}, v_{i,\ell})^{+}\right] \quad \text{and} \quad \Phi^{(t)}_i \overset{\text{def}}{=} \sum_{\ell \in [2]} \Phi^{(t)}_{i,\ell} 
\] 

\subsubsection{Ensemble Distillation Model Setting.}
Similarly, in this setting, useful conclusions for our proof are as follows,
\begin{Hypo}
For every \(\ell \in [2]\), for every \(r \in [m]\), for every \((X, Y) \in Z\) and \(i \in [k]\),
(a) For every \(p \in P_{v_{i,\ell}}(X)\), we have:
    \(
    \langle w^{(t)}_{i, r}, x_p \rangle = \langle w^{(t)}_{i, r}, v_{i,\ell}\rangle z_p \pm \widetilde{o}(\sigma_0).
    \)\\
(b) For every \(p \in P(X) \setminus (P_{v_{i,1}}(X) \cup P_{v_{i,2}}(X))\), we have:
    \(
    |\langle w^{(t)}_{i, r},x_p\rangle| \leq \widetilde{O}(\sigma_0).
    \)\\
(c) For every \(p \in [P] \setminus P(X)\), we have:
    \(
    |\langle w^{(t)}_{i, r}, x_p\rangle| \leq \widetilde{O}(\sigma_0 \gamma k).
    \)\\
Moreover, we have for every \(i \in [k]\), every \(\ell \in [2]\),\\
(g) 
    \(
    \Phi^{(t)}_{i,\ell} \geq \Omega(\sigma_0) \quad \text{and} \quad \Phi^{(t)}_{i,\ell} \leq \widetilde{O}(1).
    \)\\
(h) For every \(r \in [m]\), it holds that:
    \(
    \langle w^{(t)}_{i, r}, v_{i,\ell} \rangle \geq -\widetilde{O}(\sigma_0).
    \)
\end{Hypo}
\subsection{Main Analysis}
Based on the above preliminary, with our extended multi-view data model for OOD detection, we can get the following Induction Hypothesis for both single model and ensemble distillation model settings.
\begin{Hypo}
\label{Induction Hypothesis B.2}
For every $\ell \in [2]$, for every $r \in [m]$, for every $X^{out} \in D^{out}$ and $i\in[k]$,\\
(a) For every \(p \in P_{v_{i,\ell}}(X)\), we have:
    \(
    \langle w^{(t)}_{i, r}, x_p \rangle = \langle w^{(t)}_{i, r}, v_{i,\ell}\rangle z_p \pm \widetilde{o}(\sigma_0).
    \)\\
(b) For every \(p \in [P] \setminus P(X)\), we have:
    \(
    |\langle w^{(t)}_{i, r}, x_p\rangle| \leq \widetilde{O}(\sigma_0 \gamma k).
    \)\\
\end{Hypo}
We consider there are two stages in OOD detection with Outlier Exposure. (1) Feature learning. In this stage, models in single model and ensemble distillation model settings can learn features in their own way. (2) Logit calibration. With auxiliary OOD datsets, models can be calibrated in logit level.
\begin{Assum}
\label{assum1}
For every $i,j \in [k]$ and every $l,l^{'} \in [2]$, consider an ideal classifier $F$ satisfying
\[\sum_{r \in [m]} [\langle w_{i,r}, v_{i,l} \rangle ]^+ = \sum_{r \in [m]} [\langle w_{j,r}, v_{j,l^{'}} \rangle]^+ \pm o(1)\]
\end{Assum}

\begin{Assum}
\label{Assum2}
For every $i,j \in [k]$, every $l,l^{'} \in [2]$ and $v_{i,l}, v_{j,l'}$ are learned by single model, consider an ideal classifier $F$ satisfying: 
\[\sum_{r \in [m]} [\langle w_{i,r}, v_{i,l} \rangle ]^+ = \sum_{r \in [m]} [\langle w_{j,r}, v_{j,l^{'}} \rangle]^+ \pm o(1)\]
\end{Assum}

\begin{Prop}
\label{prop1}
For every $X^{out} \sim D^{out}$, every $(X^{in}_s, y_s) \sim D_{s}$, and every $(X^{in}_m, y_m) \sim D_{m}$, we have:
\[z(X^{out}) < z(X^{in}_s) \quad\text{and}\quad z(X^{out}) < z(X^{in}_m)\]
\end{Prop}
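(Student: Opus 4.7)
The plan is to bound $z(X)$ from above on $D^{out}$ and from below on each of $D^{in}_s$ and $D^{in}_m$, then compare the resulting bounds. Since Step 2 of each distribution definition guarantees that $P_{v_{j,1}}(X)$ and $P_{v_{j,2}}(X)$ are disjoint patch sets, the inner sum $\sum_{p \in P_{v_{j,1}}(X) \cup P_{v_{j,2}}(X)} z_p$ decomposes additively as $\sum_{p \in P_{v_{j,1}}(X)} z_p + \sum_{p \in P_{v_{j,2}}(X)} z_p$. Hence I can evaluate each term independently using the explicit coefficient-sum bounds of Step 4 of each data model.

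For $X^{out}$, since $\mathcal{V}(X^{out})$ contains only minor ID features, every $v$ appearing in the input satisfies $\sum_{p \in P_v(X^{out})} z_p \leq 0.4$. Thus for any candidate category $j$, the sum over its two associated features is at most $0.4 + 0.4 = 0.8$, yielding $z(X^{out}) \leq 0.8$. If only one of $v_{j,1}, v_{j,2}$ lies in $\mathcal{V}(X^{out})$, or if neither does, the bound is only smaller.

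For $(X^{in}_s, y_s)$, taking $j = y_s$ in the maximum captures at least the picked main feature $v_{y_s, \hat{\ell}}$, whose coefficient sum lies in $[1, O(1)]$; the remaining contribution from $v_{y_s, 3-\hat{\ell}}$ is nonnegative, giving $z(X^{in}_s) \geq 1$. For $(X^{in}_m, y_m)$, the same choice $j = y_m$ gives coefficient sums of at least $1$ on each of $v_{y_m, 1}$ and $v_{y_m, 2}$, hence $z(X^{in}_m) \geq 2$. Chaining these with $z(X^{out}) \leq 0.8$ delivers both inequalities.

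The statement is essentially a direct consequence of the quantitative gap between the main-feature weight $[1, O(1)]$ on ID inputs and the minor-feature ceiling $0.4$ on OOD inputs, so no genuine obstacle arises. The only subtlety worth double-checking is that $\mathcal{V}(X^{out})$ may happen to contain both $v_{j,1}$ and $v_{j,2}$ for some $j$, making the $0.8$ bound tight; since this still falls strictly below the ID lower bounds of $1$ and $2$, the argument closes cleanly.
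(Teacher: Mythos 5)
Your proposal is correct and takes essentially the same route as the paper: both arguments reduce to the coefficient-sum bounds $z(X^{out}) \leq 0.4 + 0.4 = 0.8$, $z(X^{in}_s) \geq 1$ (from the picked main feature), and $z(X^{in}_m) \geq 2$ (from both main features), and then compare. Your write-up is in fact slightly more careful than the paper's, since you make the disjointness of the patch sets and the case where only one or neither of $v_{j,1}, v_{j,2}$ appears in $\mathcal{V}(X^{out})$ explicit.
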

Suppose Induction Hypothesis \ref{Induction Hypothesis B.2} holds, based on the definition of $D^{out}$, $D^{in}_s$ and $D^{in}_m$, we know that \(\Omega(1)\leq z(X^{out}) \leq 0.8\), \(1 \leq z(X^{in}_s) \leq O(1)\) and \(2 \leq z(X^{in}_m) \leq O(1)\). Therefore, we can get \(z(X^{out}) < z(X^{in}_s)\) and \(z(X^{out}) < z(X^{in}_m)\). When Assumption \ref{assum1} is satisfied, 
\begin{align*}
F_{I(X^{out})}(X^{out}) &= \sum_{p \in P_{v_{I(X^{out}), 1}}(X^{out}) \bigcup P_{v_{I(X^{out}), 2}}(X^{out})}\\
& z_p\langle w^{(t)}_{I(X^{out}), r}, v_{j,\ell}\rangle \pm \widetilde{o}(\sigma_0).
\end{align*}
% \[F_{I(X^{in}_s)}(X^{in}_s) = \sum_{p \in P_{v_{j, 1}}(X) \bigcup P_{v_{j, 2}}(X)} z_p\langle w^{(t)}_{j, r}, v_{j,\ell}\rangle \pm \widetilde{o}(\sigma_0).\]
We can get similar results on $F_{I(X^{in}_s)}(X^{in}_s)$ and $F_{I(X^{in}_m)}(X^{in}_m)$. Combined with Proposition \ref{prop1}, we can get that 
% \[F_{I(X^{out})}(X^{out}) < F_{I(X^{in}_s)}(X^{in}_s) \quad \text{and} \quad  F_{I(X^{out})}(X^{out}) < F_{I(X^{in}_m)}(X^{in}_m)\] 
\begin{align*}
F_{I(X^{out})}(X^{out}) &< F_{I(X^{in}_s)}(X^{in}_s), \\ F_{I(X^{out})}(X^{out}) 
&< F_{I(X^{in}_m)}(X^{in}_m)
\end{align*}
which supports the ID/OOD discriminability of MaxLogit.

Suppose Induction Hypothesis \ref{Induction Hypothesis B.2} holds, we can prove Theorem 1 and 2 as follows.
\begin{Theom}[Calibrated Single Model]
Suppose we train a single model using the gradient descent update starting from the random initialization. When Assumption \ref{Assum2} is valid, we have:
\[
    FNR(F) \leq \frac{1}{2}(1 - \mu +o(1))
\]
\end{Theom}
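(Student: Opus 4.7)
The plan is to exploit the fundamental limitation of single-model training under MVDM: for each class $i$, gradient descent picks up only one of the two main features $v_{i,\ell_i}$ (determined at random initialization via the set $\mathcal{M}$), while the complementary feature $v_{i,3-\ell_i}$ remains essentially unlearned, with $\langle w_{i,r}, v_{i,3-\ell_i}\rangle = \widetilde{O}(\sigma_0)$ for all $r \in [m]$ by Induction Hypothesis~\ref{induction hypothesisB.1}(i). I would then case-analyze every ID sample against the worst OOD MaxLogit and count the fraction that is inevitably misclassified for any threshold $\tau$ that rejects all OOD inputs.

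First, I would combine Induction Hypothesis~\ref{Induction Hypothesis B.2} with Assumption~\ref{Assum2} to obtain the clean decomposition $F_j(X) = \Phi_{j,\ell_j}\cdot\sum_{p \in P_{v_{j,\ell_j}}(X)} z_p \pm o(1)$ for any input $X$, where $\Phi_{j,\ell_j}$ is a single prefactor independent of $j$ up to $o(1)$. This reduces comparing MaxLogit scores to comparing the total coefficient on the single learned direction per class. For a multi-view ID sample $X_m^{in}$ (probability $\mu$), the learned $v_{y,\ell_y}$ is always present with coefficient $\in[1,O(1)]$, so $F_y(X_m^{in}) \geq \Phi(1-o(1))$, strictly exceeding every OOD logit $F_j(X^{out}) \leq 0.4\,\Phi + o(1)$ by Proposition~\ref{prop1}. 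For a single-view ID sample $X_s^{in}$ with drawn direction $\hat{\ell}$, the case $\hat{\ell} = \ell_y$ (probability $1/2$) inherits the same dominance, since $v_{y,\hat{\ell}}$ is exactly the learned direction and carries coefficient $\in[1,O(1)]$.

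The failure mode is precisely the residual case $\hat{\ell} = 3-\ell_y$ for single-view data, of total mass $(1-\mu)\cdot\tfrac{1}{2}$: here $v_{y,\ell_y}$ contributes only $\rho = o(1)$ to $F_y$, and every competing class $j$ can contribute at most $\Gamma = o(1)$ through its minor-feature appearance, so $\max_j F_j(X_s^{in}) = o(1)\cdot\Phi$, strictly below the OOD MaxLogit $\Theta(1)\cdot\Phi$. Any threshold $\tau$ that correctly rejects all OOD must therefore also reject these samples, yielding $FNR(F) \leq \tfrac{1}{2}(1-\mu) + o(1) = \tfrac{1}{2}(1 - \mu + o(1))$ after absorbing the $\widetilde{O}(\sigma_0)$ and $\widetilde{O}(\sigma_0\gamma k)$ error terms from Induction Hypothesis~\ref{Induction Hypothesis B.2}. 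The main obstacle is this last case: one must rigorously exclude the possibility that a spurious minor-feature alignment or Gaussian-noise inner product accidentally pushes $\max_j F_j(X_s^{in})$ above $0.4\,\Phi$. This requires the equalization provided by Assumption~\ref{Assum2} together with a careful accounting of the cross-term bounds in Induction Hypothesis~\ref{Induction Hypothesis B.2}(b) so that every off-direction contribution remains $o(1)$ uniformly over $j \in [k]$.
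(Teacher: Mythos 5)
Your proposal is correct and follows essentially the same route as the paper's proof: equalize the learned-feature prefactors via Assumption 2, then case-split ID data into multi-view, single-view with the learned feature ($\hat{\ell}\in\mathcal{M}$, mass $\tfrac{1}{2}(1-\mu)$ correctly dominating the $0.4C$ OOD ceiling), and single-view with the unlearned feature (the only mass that can fail). The only cosmetic difference is that your closing step argues these residual samples \emph{must} be rejected, which is the lower-bound direction and is not needed for the stated upper bound — the bound already follows from the dominance you establish for all other ID samples.
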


\begin{proof}
For \(i, j \in [k]\), \(\ell^{'}_i, \ell^{'}_j \in \mathcal{M}\), according to Assumption \ref{Assum2}, we have
\[|\Phi_{i, \ell^{'}_i} - \Phi_{j, \ell^{'}_j}| = \widetilde{o}(\sigma_0) \]
we denote,
\[ \Phi_{i,\ell_i^{'}} = \Phi_{j, \ell_j^{'}} = C \pm \widetilde{o}(\sigma_0).\]

For \((X^{in}_s, y_s) \in D^{in}_s\) with half probability \(\ell(X^{in}_s) \notin \mathcal{M}\), we have  \(F_y(X) \leq O(\rho) + \frac{1}{\text{ploylog}(k)}\).

For \((X^{in}_s, y_s) \in D^{in}_s\) with half probability \(\ell(X^{in}_s) \in \mathcal{M}\),  we have  \(F_y(X) \geq C \pm \widetilde{O}(\sigma_0)\).

For \((X^{in}_m, y_m) \in D^{in}_m\), we have \(F_y(X) \geq C \pm \widetilde{O}(\sigma_0)\).

For \(X^{out} \in D^{out}\), we have \(F_{I(X^{out})}(X^{out}) \leq 0.4C \pm \widetilde{O}(\sigma_0)\).

As In-distribution $D_{in}$ consists of data from $D_m^{in}$ w.p. $\mu$ and $D_s^{in}$ w.p. $1 - \mu$., we can immediately get the \(FNR(F) \leq \frac{1}{2}(1-\mu + o(1))\).
\end{proof}
Theorem 1 reveals a calibrated single model can discriminate all multi-view ID data and half of single-view ID data from OOD. 

\begin{Theom}[Calibrated Ensemble Distillation Model]
Suppose we train a model using ensemble distillation. When Assumption \ref{assum1} is valid, we have:
\[
    FNR(F) \leq o(1)
\]
\end{Theom}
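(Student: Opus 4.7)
The plan is to follow the logit-gap template used for Theorem~\ref{theorem1}, but exploit the crucial strengthening in the ensemble distillation regime: the ensemble-distillation induction hypothesis guarantees $\Phi^{(t)}_{i,\ell} \in [\Omega(\sigma_0), \widetilde{O}(1)]$ for \emph{every} pair $(i,\ell) \in [k] \times [2]$, meaning both feature directions per class are learned. Combined with Assumption~\ref{assumption1}, which applies to all features rather than only those in $\mathcal{M}$ as in Assumption~\ref{assumption2}, this eliminates the single-view failure mode responsible for the $\frac{1}{2}(1-\mu)$ slack in Theorem~\ref{theorem1} and should push the false-negative bound down to $o(1)$.

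Concretely, I would first invoke Assumption~\ref{assumption1} across all pairs to obtain a common constant $C$ with $\Phi_{i,\ell} = C \pm o(1)$ for every $i \in [k]$, $\ell \in [2]$. Combining this with items (a)--(c) of the ensemble-distillation induction hypothesis for in-distribution inputs and items (a)--(b) of Induction Hypothesis~\ref{Induction Hypothesis B.2} for out-of-distribution inputs, each class logit decomposes as
\[
F_i(X) \;=\; C \sum_{\ell \in [2]}\sum_{p \in P_{v_{i,\ell}}(X)} z_p \;\pm\; o(1),
\]
where the $o(1)$ absorbs both the $\widetilde{O}(\sigma_0)$ feature-free-patch contribution and the $\widetilde{O}(\sigma_0 \gamma k)$ cross-feature contribution. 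A three-case reading of Definitions~1--3 then yields: for $(X^{in}_m, y_m)$ both main features contribute $\sum_p z_p \geq 1$, so $F_{y_m}(X^{in}_m) \geq (2-o(1))C$; for $(X^{in}_s, y_s)$ the dominant main feature $v_{y_s,\hat\ell}$ alone gives $F_{y_s}(X^{in}_s) \geq (1-o(1))C$; and for $X^{out}$ every present feature is minor with $\sum_p z_p \leq 0.4$, hence $\text{MaxLogit}(X^{out};F) \leq 0.4C + o(1)$. Any threshold $\tau \in (0.4C + o(1),\, C - o(1))$, in particular the $95\%$-ID-coverage threshold of Equation~(\ref{OODdetector}), therefore accepts every ID sample and rejects every OOD sample, so $FNR(F) \leq o(1)$.

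The main obstacle I expect is the careful bookkeeping of the $o(1)$ error terms in the OOD logit. A sample $X^{out}$ carries $\Theta(s)$ minor ID features by Definition~3, each contributing feature noise $\alpha_{p,v'} \in [0,\gamma]$ and Gaussian noise $\xi_p$ across $C_p$ patches, and one must verify that the aggregated noise contribution to any irrelevant logit $F_i(X^{out})$ stays strictly below $0.6C$ so as not to close the gap $(1-0.4)C$. This is ultimately controlled by $\sigma_0 = 1/(\sqrt{d}\,\text{polylog}(k))$ and $\gamma = k^{-1.5}$ being polynomially small, but the argument must sum the $\widetilde{O}(\sigma_0 \gamma k)$ bound of item (c) over up to $sP$ patches and combine with the $\widetilde{O}(\sigma_0)$ bound of item (a) across the $\Theta(s)$ minor-feature patches; this concentration step is implicit in the induction hypotheses but should be written out explicitly when closing the proof.
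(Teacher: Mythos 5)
Your proposal is correct and follows essentially the same route as the paper's proof: invoke Assumption 1 to equalize all $\Phi_{i,\ell}$ to a common constant $C'$, then compare the resulting logit lower bounds $2C'$ (multi-view ID) and $C'$ (single-view ID) against the OOD upper bound $0.4C'$, concluding a clean separation and hence $FNR(F) \leq o(1)$. Your additional remarks on threshold placement and on summing the $\widetilde{O}(\sigma_0)$ and $\widetilde{O}(\sigma_0\gamma k)$ noise terms make explicit some bookkeeping the paper leaves implicit, but they do not change the argument.
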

\begin{proof}
For \(i, j \in [k]\), models can learn both features of categories $i,j$ in the ensemble distillation model setting. According to Assumption \ref{assum1}, for every $\ell_i \in [2]$ and every $\ell_j \in [2]$, we have
\[|\Phi_{i, \ell_i} - \Phi_{j, \ell_j}| = \widetilde{o}(\sigma_0) \] 

we denote

\[\quad \Phi_{i,\ell_i} = \Phi_{j, \ell_j} = C^{'} \pm \widetilde{o}(\sigma_0).\]

For \((X^{in}_s, y_s) \in D^{in}_s\), we have  
\(F_y(X^{in}_s) \geq C^{'} \pm \widetilde{O}(\sigma_0)\).

For \((X^{in}_m, y_m) \in D^{in}_m\), we have \(F_y(X^{in}_m) \geq 2C' \pm \widetilde{O}(\sigma_0)\).

For \(X^{out} \in D^{out}\), we have \(F_{I(X^{out})}(X^{out}) \leq 0.4C' \pm \widetilde{O}(\sigma_0)\).
Therefore, we can easily get \(F_{I(X^{out})}(X^{out}) < F_y(X^{in}_s)\) and \(F_{I(X^{out})}(X^{out}) < F_y(X^{in}_m)\). 
We can get \(FNR(F) \leq o(1)\).
\end{proof}
Theorem 2 reveals that a calibrated ensemble distillation model can discriminate both multi-view ID data and single-view ID data from OOD. 

A smaller FNR means less ID data will be detected as OOD while correctly detecting all OOD data, \textit{i.e.}, a more discriminative boundary of ID and OOD. 

\section{C. Derivation of Parameter Update}
\label{Derivation of Parameter Update}
For parameter update of MVOL, we have following derivation.
\begin{equation}
\frac{\partial C}{\partial z_i} = \sum_{j = 1}^{k}p_j q_i - p_i
\end{equation}
Where $C$ is the loss term using OOD data.
\begin{proof}[Derivation]
\begin{equation}
C = -\sum_{j = 1}^k p_j\log q_j = - \sum_{j \neq i}^k p_j\log q_j - p_i\log q_i = C_1 + C_2
\end{equation}
\begin{align*}
\frac{\partial C}{\partial z_i} &= \frac{\partial C_1}{\partial z_i} + \frac{\partial C_2}{\partial z_i}\\
&= -\sum_{j \neq i}^{k}\frac{\partial (p_j\log q_j)}{\partial q_j} \frac{\partial q_j}{\partial z_i}  -\frac{\partial(  p_i\log q_i)}{\partial q_i} \frac{\partial q_i}{\partial z_i}\\
&= - \sum_{j \neq i}^{k}\frac{p_j}{q_j} \frac{\partial q_j}{\partial z_i} - \frac{p_i}{q_i} \frac{\partial q_i}{\partial z_i}\\
&= \sum_{j \neq i}^{K}p_j q_i + p_i(q_i - 1)\\
&= \sum_{j = 1}^{K}p_j q_i - p_i
\end{align*}
\end{proof}
For parameter update of OE, we have following derivation.
\begin{equation}
\frac{\partial C}{\partial z_i} = q_i - \frac{1}{k} =\frac{e^{z_i}}{\sum_j e^{z_j}} - \frac{1}{k}
\end{equation}
Where $C$ is the loss term using OOD data.
\begin{proof}[Derivation]
\begin{equation}
C = -\sum_{j = 1}^k \frac{1}{k}\log q_j = - \sum_{j \neq i}^k \frac{1}{k}\log q_j - \frac{1}{k}\log q_i = C_1 + C_2
\end{equation}
\begin{align*}
\frac{\partial C}{\partial z_i} &= \frac{\partial C_1}{\partial z_i} + \frac{\partial C_2}{\partial z_i}\\
&= \frac{k - 1}{k} q_i + \frac{1}{k}(q_i - 1)\\
&= q_i - \frac{1}{k}
\end{align*}
\end{proof}
\section{D. Future Work}
\label{Limitations and Future Work}
In this work, we proposed an extended MVDM to formalize the correlations between ID and OOD data. 
With this attempt, we can further study more problems in OOD detection. (1) The outlier sampling strategy for Outlier Exposure. (2) Seeking efficient ID objective to calibrate logits of models without auxiliary OOD dataset. 

As shown in Figure 1 (c), the model can produces unexpectedly low MaxLogit for ID data, which harms the performance of MaxLogit as the OOD scoring function. Improving OOD detection from this factor could also be a potential research direction.

\section{E. Experimental Details and Additional Results}
\subsection{Training Details}
\label{Training Details}
For all experiments and methods, we use the Wide ResNet \citep{zagoruyko2016wideresnet} architecture with 40 layers and a widen factor of 2. Models are optimized using stochastic gradient descent with Nesterov momentum \citep{sgd}. The batch size is 128 for both ID and OOD training data. The weight decay coefficient is $0.0005$, and the momentum is $0.09$. There are 100 epochs in all experiments. The initial learning rate is 0.1 and decayed by a factor of 10 after 50, 75, and 90 epochs. The weight $\beta$ is chosen as the same as \citep{hendrycks2018oe},\textit{i.e.}, $0.5$. The hyperparameter $\epsilon$ is $0.1$ for CIFAR-10 and $0.02$ for CIFAR-100. We use the ensemble of 10 independently trained models with random initialization as a teacher model when performing ensemble distillation. The distillation algorithm is from the original paper \citep{hinton2015distilling} referred to \citep{allen-zhu2023towards}. The temperature hyperparameter is 2 for both CIFAR-10 and CIFAR-100. The loss weight of cross-entropy loss is 0.5 for CIFAR-100 and 0.1 for CIFAR-10. We generate soft labels for the ID training set together before training models to reduce training time. We run all experiments using PyTorch on NVIDIA GeForce RTX 2080 GPUs. We run 5 independent training runs with random initializations for each experiment. The random initialization in this paper is achieved by setting different random seeds for each independent run in practice, referred to \citep{allen-zhu2023towards}. All baselines are reproduced based on official repositories.
\subsection{Additional Experimental Setups}
\label{Additional Experimental Setups}
Figure 1: (a)
We train ten Wide ResNets \citep{zagoruyko2016wideresnet} with random initialization solely on CIFAR-10. An OOD image from 300K RandomImages is fed into these models. The obtained results are averages of logits of these models. All error values are given as standard deviations.  (b) We test one model in (a) on the CIFAR-10 test set and six OOD test sets. The average MaxLogit on each set is used in (b). (c) We utilize one model in (a) to infer the CIFAR-10 test set and Textures OOD test sets. With the obtained MaxLogit on these sets, we draw a density graph. 

\subsection{Test OOD Datsets}
\label{Test OOD Datsets}
Following \citep{liu2020energy, chen2021atom, ming2022poem}, we use six natural image datasets as OOD test datasets. They are SVHN \citep{SVHN}, Textures \citep{Textures}, Places365 \citep{places365}, LSUN (crop) \citep{LSUN}, LSUN (resize) \citep{LSUN} , and iSUN \citep{iSUN}. All images are $32 * 32$ pixels the same as CIFAR images.

\textbf{SVHN.} The SVHN dataset contains color images of house numbers, categorized into ten classes representing digits 0-9. The SVHN OOD test set contains 1,000 randomly selected test images for each class for evaluation. There are 10,000 images in total.

\textbf{Places365.} The Places365 dataset contains large-scale photographs of scenes categorized into 365 distinct scene categories. Each category in the test set comprises 900 images. For OOD detection evaluation, the Places365 test set includes a randomly sampled subset of 10,000 images.

\textbf{Textures.} The Describable Textures Dataset contains textural images captured in natural settings. The entire collection, comprising 5,640 images, is used for evaluation.

\textbf{LSUN (crop) and LSUN (resize).}
The Large-scale Scene Understanding dataset (LSUN) includes a test set containing 10,000 images from 10 scenes. Two subsets, LSUN-C and LSUN-R, are created by randomly cropping images to 32×32 pixels and downsampling images to 32×32 pixels, respectively.

\textbf{iSUN.}
The iSUN dataset consists of a subset of SUN images, comprising the entire collection of 8,925 images.

\subsection{More Visualizations}
\label{More Visualizations}
Figure \ref{IDAttributes_Appendix} presents more examples illustrating that OOD data could contain ID attributes. In all these OOD images, pre-trained neural networks show higher logits on specific categories than the others.

Figure \ref{optimization_effect_Appendix} presents more examples to show the different optimization objectives of OE and our MVOL on auxiliary OOD data. In these cases, we can observe that OE (red) tends to produce logits with a much smaller magnitude than MVOL (blue). Negative logits with smaller magnitudes suggest OE overemphasizes noise on type I categories. Positive logits with smaller magnitudes suggest OE underemphasizes minor ID features on type II categories. 
\begin{figure*}[htbp]
\centering
\includegraphics[scale=0.5]{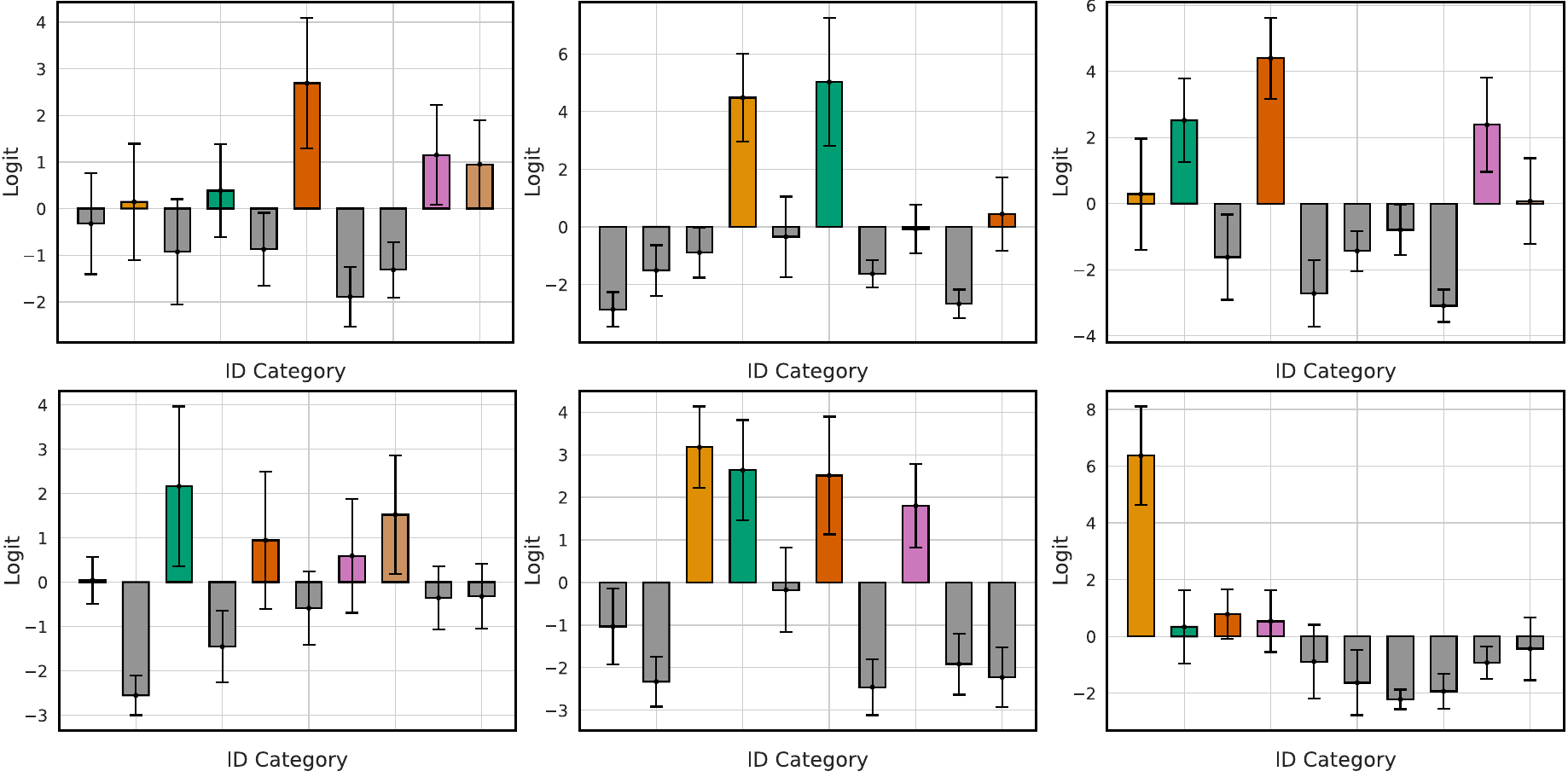}
\caption{More examples for illustrating that OOD data could have ID attributes.}
\label{IDAttributes_Appendix}
\end{figure*}
\begin{figure*}[htbp]
\centering
\includegraphics[scale=0.5]{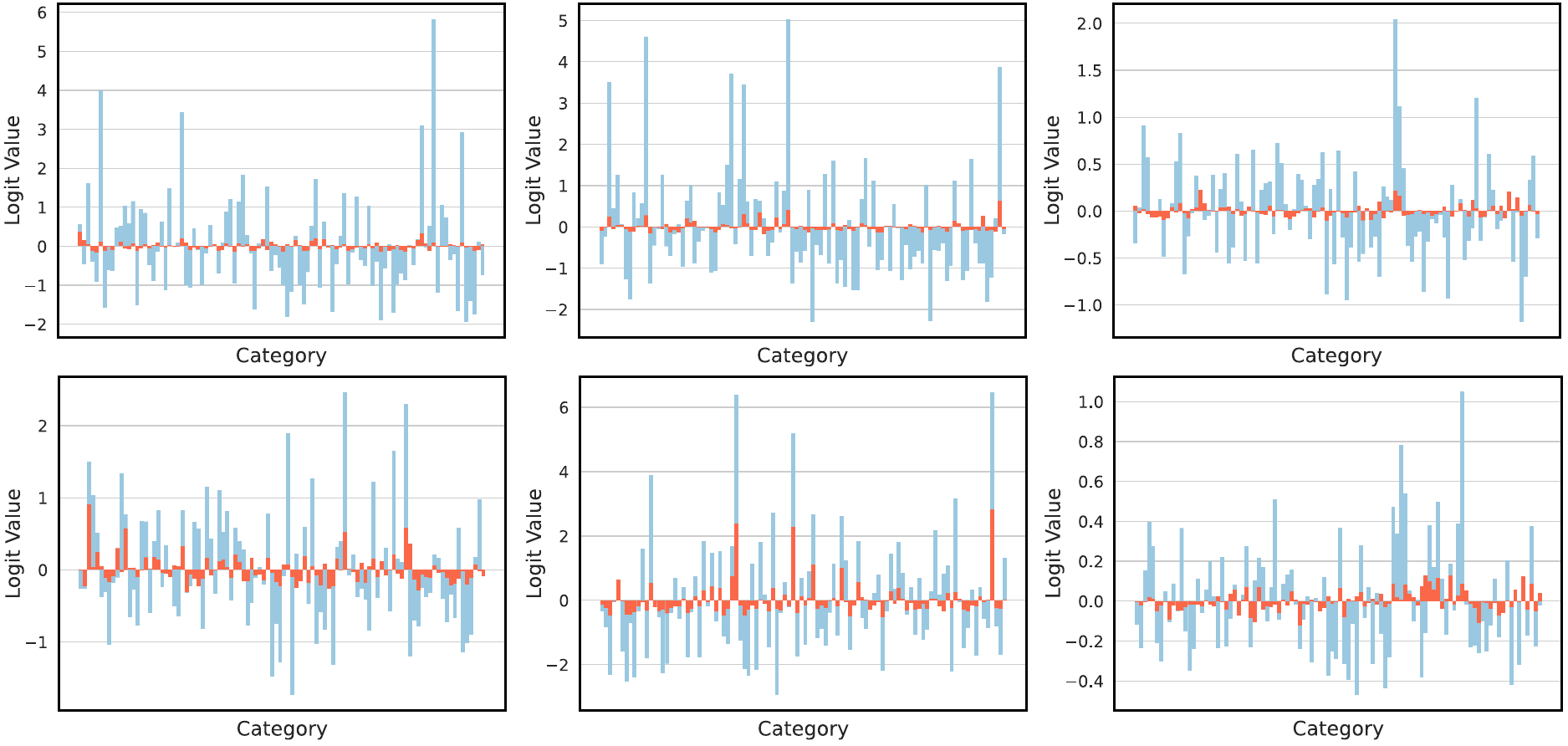}
\caption{More examples for illustrating the optimization objectives of OE and Our MVOL on auxiliary OOD samples. Blue bars represent our MVOL, and red bars represent OE. CIFAR-100 and 300K RandomImages provide ID and auxiliary OOD data, respectively.}
\label{optimization_effect_Appendix}
\end{figure*}
\subsection{Sensitivity Analysis}
\label{Sensitivity Analysis}
To study how hyperparameter $\epsilon$ affects MVOL, we create a validation set by randomly sampling $10,000$ images from 300K RandomImages. These validation images are disjoint with our test datasets. We conduct experiments on both CIFAR-10 and CIFAR-100. The results are shown in Table \ref{sensitivity}. A mild value of $\beta$ generally works well.  A large $\epsilon$ indicates fewer logits will be considered as type II categories and the loss of more information to moderate the logits, leading to a worse calibration effect. A small $\epsilon$ indicates that more logits will be considered type II categories. It introduces more noise when performing calibration, leading to a worse calibration effect. We choose 0.100 for CIFAR-10 and 0.02 for CIFAR-100.
\begin{table}[htbp]
  \centering
  \tabcolsep=0.14cm
  \caption{Sensitivity Analysis on $\epsilon$}
    \begin{tabular}{cccccccc}
    \toprule
     \multirow{2}[0]{*}{$\epsilon$}&\multicolumn{3}{c}{CIFAR-10}  & \multirow{2}[0]{*}{$\epsilon$}&\multicolumn{3}{c}{CIFAR-100} \\
    \cmidrule(lr){2-4} \cmidrule(lr){6-8}
     & FPR95 $\downarrow$ & AUROC $\uparrow$ & Acc $\uparrow$   &  & FPR95 $\downarrow$ & AUROC $\uparrow$ & Acc $\uparrow$ \\
    \midrule
    0.050  & 6.07  & 97.56 & \textbf{94.82} & 0.005 & 36.74 & 91.84 & \textbf{74.74} \\
    0.075 & 6.05  & 97.70  & 94.60  & 0.010  & 34.00    & 92.20  & 74.22 \\
    0.100   & \textbf{5.96}  & \textbf{97.71} & 94.65 & 0.015 & \textbf{33.70}  & 92.40  & 74.29 \\
    0.125 & 6.34  & 97.62 & 94.59 & 0.020  & 33.90  & \textbf{92.50}  & 74.69 \\
    0.150  & 6.52  & 97.60  & 94.57 & 0.025 & 33.99 & 92.35 & 74.29 \\
    0.200   & 6.57  & 97.52 & 94.65 & 0.030  & 35.54 & 92.25 & 74.26 \\
    0.300   & 7.98  & 97.51 & 94.54 & 0.040  & 34.99 & 92.46 & 74.23 \\
    \bottomrule
    \end{tabular}%
  \label{sensitivity}%
\end{table}%

\begin{table}[htbp]
  \centering
%   \footnotesize
  \caption{Additional results provided to understand MVOL. Experiments are conducted in the single model setting. ID and auxiliary OOD datasets are CIFAR-100 and 300K RandomImages, respectively.}
    \begin{tabular}{llll}
    \toprule
    Method &FPR95 $\downarrow$ & AUROC $\uparrow$ & Acc $\uparrow$ \\
    \midrule
    ZeroMVOL  & \multicolumn{1}{l}{54.06{\tiny \ $\pm$ 3.49}} & \multicolumn{1}{l}{87.28{\tiny \ $\pm$ 1.34}} & \multicolumn{1}{l}{\textbf{74.82}{\tiny \ $\pm$ 0.38}} \\
    MVOL  & \textbf{42.96}{\tiny \ $\pm$ 0.86} & \textbf{90.69}{\tiny \ $\pm$ 0.26} & 74.29{\tiny \ $\pm$ 0.33} \\
    \bottomrule
    \end{tabular}%
  \label{closer}%
\end{table}%
\begin{table*}[htbp]
  \caption{Main results on MisD. The best result is in bold, and the second-best result is underlined.}
  \centering
  {
%   \footnotesize
    \tabcolsep=0.15cm
    \begin{tabular}{llllllll}
    \toprule
     \multirow{2}[0]{*}{Method} & \multicolumn{3}{c}{CIFAR-10} & \multicolumn{3}{c}{CIFAR-100} \\
     \cmidrule(lr){2-4} \cmidrule(lr){5-7}
     & FPR95 $\downarrow$ & AUROC $\uparrow$ & Acc $\uparrow$& FPR95 $\downarrow$& AUROC $\uparrow$& Acc $\uparrow$\\
    \midrule
    &       &       \multicolumn{3}{c}{Single Model Setting} & \\
    % \cmidrule(lr){3-5}
    \midrule
    MSP   & \textbf{38.68}{\tiny \ $\pm$ 2.01} & \underline{92.30}{\tiny \ $\pm$ 0.29} & 94.22{\tiny \ $\pm$ 0.13} & \textbf{64.73}{\tiny \ $\pm$ 0.95} & \textbf{85.99}{\tiny \ $\pm$ 0.32} & \textbf{74.81}{\tiny \ $\pm$ 0.30} \\
    OE    & 45.72{\tiny \ $\pm$ 1.55} & 91.25{\tiny \ $\pm$ 0.29} & \underline{94.46}{\tiny \ $\pm$ 0.19} & 71.21{\tiny \ $\pm$ 1.18} & 84.13{\tiny \ $\pm$ 0.41} & 73.86{\tiny \ $\pm$ 0.23} \\
    MVOL (Ours)  & \underline{41.67}{\tiny \ $\pm$ 3.44} & \textbf{93.05}{\tiny \ $\pm$ 0.14} & \textbf{94.68}{\tiny \ $\pm$ 0.11} & \underline{69.62}{\tiny \ $\pm$ 1.57} & \underline{84.98}{\tiny \ $\pm$ 0.47} & \underline{74.38}{\tiny \ $\pm$ 0.35} \\

    \midrule
    &       &    \multicolumn{3}{c}{ensemble distillation model setting} &  \\
    % \cmidrule{3-5}
    \midrule
    MSP   & \textbf{38.09}{\tiny \ $\pm$ 2.10} & \underline{92.77}{\tiny \ $\pm$ 0.39} & 94.27{\tiny \ $\pm$ 0.13} & \textbf{64.05}{\tiny \ $\pm$ 3.25} & \underline{86.47}{\tiny \ $\pm$ 0.53} & \textbf{76.99}{\tiny \ $\pm$ 0.16} \\
    OE    & 42.20{\tiny \ $\pm$ 1.51} & 92.47{\tiny \ $\pm$ 0.28} & \textbf{94.72}{\tiny \ $\pm$ 0.23} & 71.25{\tiny \ $\pm$ 1.66} & 85.99{\tiny \ $\pm$ 0.31} & 74.94{\tiny \ $\pm$ 0.19} \\
    MVOL (Ours)  & \underline{39.94}{\tiny \ $\pm$ 2.27} & \textbf{93.04}{\tiny \ $\pm$ 0.34} & \underline{94.69}{\tiny \ $\pm$ 0.17} & \underline{64.61}{\tiny \ $\pm$ 1.00} & \textbf{86.70}{\tiny \ $\pm$ 0.36} & \underline{76.43}{\tiny \ $\pm$ 0.44} \\
    \bottomrule
    \end{tabular}%
    }
  \label{faliure detection}%
\end{table*}%
\subsection{A Closer Look at Multi-view based Learning Objective in MVOL}
\label{A Closer Look at MVOL}
When MVOL calibrates logits on auxiliary outliers, logits of category $i\in S_{j, I}^{(t)}$ increase according to their softmax-normalized values. 

In other words, a lower $p_{j, i}$ in Equation (6) leads to a smaller gradient weight and usually increases fewer logits. In this section, we conduct experiments to illustrate the logic of this operation. We set $p_{j,i} = 0$ $i\in S_{j,I}^{(t)}$, denoted as ZeroMVOL, a control group. We use CIFAR-100 as an ID dataset. The main results are shown in Table \ref{closer}. We can observe that ZeroMVOL performs much worse than MVOL. Specifically, ZeroMVOL increases FPR95 by $11.1\%$ compared with MVOL. This means simply suppressing logits of $i\in S_{j, I}^{(t)}$ can harm logit calibration. The reason could be that it is hard to guarantee that $S_{j, I}^{(t)}$ are all type I categories via a constant threshold $\epsilon$. $S_{j, I}^{(t)}$ could also contain type II categories with minor ID features associated with them. With our MVOL, these categories could have larger softmax-normalized values and gain a more logit-increasing gradient. Meanwhile, this operation also maintains less influence on these categories associated with only noise due to their smaller softmax-normalized values. This way could avoid overemphasizing noise and mitigate the limitation of threshold-based methods, \textit{i.e}., the performance is sensitive to the threshold value.
\subsection{Mis-classification Detection Performance}
\label{Mis-classification Detection Performance}
As demonstrated in \citep{zhu2023openmix}, employing auxiliary outliers for OOD detection may decrease models' misclassification detection (MisD) performance. We assess MVOL's impact on alleviating this degradation. A model trained solely on an ID data serves as a strong baseline  \citep{hendrycks2017MSP}. The maximum softmax probability (MSP) is used as a scoring function, a common practice in this area. We use common metrics in the literature to evaluate MisD performance. They are the area under the receiver operating characteristic curve (AUROC) and the false positive rate with a $95\%$ true positive rate (FPR95). A false positive is a misclassified data that is identified as correct. We also report the classification accuracy (Acc) as a reference. The results are depicted in Table \ref{faliure detection}. MVOL outperforms OE in both single model and ensemble distillation model settings. For example, in the ensemble distillation model setting on CIFAR-100, MVOL achieves $64.61\%$ on FPR95, a slight increase of $0.56\%$ over MSP. Conversely, OE exhibits a much more increase of $7.2\%$ than MVOL. This performance gain in MisD could be attributed to MVOL's unified perspective of OOD and ID. MVOL avoids over-suppressing logit responses to ID features, which help to maintain a high confidence for correctly classified samples.

Additionally, we evaluate the MisD performance of models trained with wild datasets. The results are presented in Table \ref{noise misd}. In both settings, MVOL consistently maintains strong MisD performance. For example, as the noise level increases from 0.05 to 0.5 in the ensemble distillation model setting, MVOL sustains the AUROC metric above $91\%$.  In contrast, OE experiences a reduction in AUROC from $91.39\%$ to $84.84\%$, marking a substantial decline. This MisD performance gain could be attributed to MVOL's rationally handling ID attributes in auxiliary outliers, rather than blindly suppressing them. 

\begin{table*}[htbp]
  \centering
  \caption{Main results on MisD performance of models trained with wild datasets.}
    {
    % \footnotesize
    \tabcolsep=0.15cm
    \begin{tabular}{llllllll}
    \toprule
    \multirow{2}[0]{*}{Noise} &\multirow{2}[0]{*}{Method}& \multicolumn{3}{c}{Single Model Setting} & \multicolumn{3}{c}{ensemble distillation model setting} \\
          \cmidrule(lr){3-5}\cmidrule(lr){6-8}
           & & FPR95 $\downarrow$ & AUROC$\uparrow$& Acc$ \uparrow$   & FPR95 $\downarrow$ & AUROC $\uparrow$& Acc $\uparrow$ \\
    \midrule
    $\backslash$& MSP   & \textcolor[RGB]{128,128,128}{49.17{\tiny \ $\pm$ 1.29}} & \textcolor[RGB]{128,128,128}{90.92{\tiny \ $\pm$ 0.23}} & \textcolor[RGB]{128,128,128}{91.38{\tiny \ $\pm$ 0.27}} & \textcolor[RGB]{128,128,128}{48.31{\tiny \ $\pm$ 2.00}} & \textcolor[RGB]{128,128,128}{90.68{\tiny \ $\pm$ 0.37}} & \textcolor[RGB]{128,128,128}{91.94{\tiny \ $\pm$ 0.09}} \\ 
    \midrule
    % & & &    \multicolumn{2}{c}{$\pi = 0$} &  \\
    \multirow{2}[0]{*}{$\alpha = 0$} &     OE    & \textbf{52.18}{\tiny \ $\pm$ 1.68} & 91.20{\tiny \ $\pm$ 0.11} & \textbf{92.24}{\tiny \ $\pm$ 0.09} & 51.85{\tiny \ $\pm$ 2.29} & \textbf{92.28}{\tiny \ $\pm$ 0.30} & \textbf{92.32}{\tiny \ $\pm$ 0.16} \\
    & MVOL (Ours)  & 53.84{\tiny \ $\pm$ 2.17} & \textbf{91.28}{\tiny \ $\pm$ 0.18} & 91.71{\tiny \ $\pm$ 0.10} & \textbf{46.46}{\tiny \ $\pm$ 1.04} & 91.96{\tiny \ $\pm$ 0.32} & 91.96{\tiny \ $\pm$ 0.14} \\       
    \midrule
    % & & &    \multicolumn{2}{c}{$\pi = 0.05$} &  \\
    \multirow{2}[0]{*}{$\alpha = 0.05$} &     OE    & 54.41{\tiny \ $\pm$ 1.71} & 89.41{\tiny \ $\pm$ 0.23} & 91.29{\tiny \ $\pm$ 0.22} & 54.37{\tiny \ $\pm$ 1.77} & 91.39{\tiny \ $\pm$ 0.40} & 91.51{\tiny \ $\pm$ 0.20} \\
    & MVOL (Ours)  & \textbf{53.84}{\tiny \ $\pm$ 2.46} & \textbf{91.35}{\tiny \ $\pm$ 0.35} & \textbf{91.75}{\tiny \ $\pm$ 0.25} & \textbf{47.68}{\tiny \ $\pm$ 2.23} & \textbf{91.69}{\tiny \ $\pm$ 0.37} & \textbf{91.74}{\tiny \ $\pm$ 0.34} \\       
    \midrule
    % & & &    \multicolumn{2}{c}{$\pi = 0.1$} &   \\
    \multirow{2}[0]{*}{$\alpha = 0.1$} &     OE    & 55.92{\tiny \ $\pm$ 1.45} & 88.20{\tiny \ $\pm$ 0.73} & 90.92{\tiny \ $\pm$ 0.35} & 55.55{\tiny \ $\pm$ 2.12} & 90.67{\tiny \ $\pm$ 0.26} & 91.01{\tiny \ $\pm$ 0.24} \\
    & MVOL (Ours)  & \textbf{53.80}{\tiny \ $\pm$ 2.66} & \textbf{91.36}{\tiny \ $\pm$ 0.41} & \textbf{91.55}{\tiny \ $\pm$ 0.34} & \textbf{46.92}{\tiny \ $\pm$ 2.99} & \textbf{91.52}{\tiny \ $\pm$ 0.60} & \textbf{91.64}{\tiny \ $\pm$ 0.31} \\       
    \midrule
    % & & &    \multicolumn{2}{c}{$\pi = 0.3$} &  \\
    \multirow{2}[0]{*}{$\alpha = 0.3$} &     OE    & 59.75{\tiny \ $\pm$ 1.93} & 85.88{\tiny \ $\pm$ 0.50} & 89.04{\tiny \ $\pm$ 0.33} & 62.00{\tiny \ $\pm$ 1.95} & 87.39{\tiny \ $\pm$ 0.30} & 88.60{\tiny \ $\pm$ 0.15} \\
    &MVOL (Ours)  & \textbf{53.50}{\tiny \ $\pm$ 1.47} & \textbf{90.93}{\tiny \ $\pm$ 0.24} & \textbf{91.24}{\tiny \ $\pm$ 0.17} & \textbf{46.60}{\tiny \ $\pm$ 1.79} & \textbf{91.69}{\tiny \ $\pm$ 0.54} & \textbf{91.76}{\tiny \ $\pm$ 0.08} \\        
    \midrule
    % & & &    \multicolumn{2}{c}{$\pi = 0.5$} &  \\
    \multirow{2}[0]{*}{$\alpha = 0.5$} &     OE    & 61.47{\tiny \ $\pm$ 2.37} & 84.99{\tiny \ $\pm$ 0.99} & 88.04{\tiny \ $\pm$ 0.20} & 66.68{\tiny \ $\pm$ 1.34} & 84.84{\tiny \ $\pm$ 0.45} & 86.80{\tiny \ $\pm$ 0.23} \\
    & MVOL (Ours)  & \textbf{52.35}{\tiny \ $\pm$ 1.46} & \textbf{91.05}{\tiny \ $\pm$ 0.45} & \textbf{90.79}{\tiny \ $\pm$ 0.24} & \textbf{47.30}{\tiny \ $\pm$ 2.03} & \textbf{91.68}{\tiny \ $\pm$ 0.47} & \textbf{91.81}{\tiny \ $\pm$ 0.22} \\       
    \bottomrule
    \end{tabular}%
    }
  \label{noise misd}%
\end{table*}%
\subsection{OOD Detection with Fine-tuning}
\label{OOD Detection with Fine-tuning}

In this section, we evaluate MVOL using the fine-tuning technique. When conducting these experiments, we independently train each model five runs with random initialization. Specifically, there are 50 epochs without an auxiliary OOD dataset followed by another 50 epochs with that. 
All other hyper-parameters remain consistent with those used in Table 1. The main results are presented in Table \ref{fine-tune}. Through comparisons, we make two main observations. (1) MVOL consistently outperforms other fine-tune-based methods regarding ID accuracy. (2) MVOL overall excels in OOD detection across both settings, achieving the best or second-best results. Additionally, we observe that MVOL's fine-tuning performance is not as strong as MVOL's from-scratch training in Table 1. The reason could be that with fine-tuning, methods such as confidence loss in OE + MaxLogit and Energy Bounded Penalty in Energy w/Aux tend to suppress minor ID features and emphasize noise in auxiliary outliers to a lesser extent.

\begin{table*}[htbp]
    \centering
    \caption{Main results on OOD detection with fine-tuning. The best result is in bold, and the second-best result is underlined.}
    \tabcolsep=0.15cm
    % \begin{footnotesize}
    \begin{tabular}{llllllll}
    \toprule
    \multirow{2}[0]{*}{Method}& \multicolumn{3}{c}{CIFAR-10} & \multicolumn{3}{c}{CIFAR100} \\
    \cmidrule(lr){2-4}\cmidrule(lr){5-7}
     & FPR95 $\downarrow$ & AUROC $\uparrow$& Acc $\uparrow$   & FPR95 $\downarrow$ & AUROC $\uparrow$& Acc $\uparrow$ \\
        \midrule
        &       &       \multicolumn{3}{c}{Single Model Setting} & \\
        % \cmidrule(lr){3-5}
        \midrule
         Energy w/Aux & \underline{4.08}{\tiny \ $\pm$ 0.33} & 98.03{\tiny \ $\pm$ 0.21} & 92.33{\tiny \ $\pm$ 0.21} & 44.89{\tiny \ $\pm$ 3.95} & \textbf{91.00}{\tiny \ $\pm$ 0.94} & 70.25{\tiny \ $\pm$ 0.32} \\
         OE & 4.88{\tiny \ $\pm$ 0.18} & 98.36{\tiny \ $\pm$ 0.06} & \underline{94.17}{\tiny \ $\pm$ 0.06} & 45.99{\tiny \ $\pm$ 0.69} & 89.80{\tiny \ $\pm$ 0.34} & 73.58{\tiny \ $\pm$ 0.22} \\
         OE + MaxLogit & 4.73{\tiny \ $\pm$ 0.15} & \underline{98.39}{\tiny \ $\pm$ 0.07} & 94.17{\tiny \ $\pm$ 0.06} & 45.59{\tiny \ $\pm$ 0.83} & 90.56{\tiny \ $\pm$ 0.36} & 73.58{\tiny \ $\pm$ 0.22} \\
        MVOL(ours) & \textbf{3.88}{\tiny \ $\pm$ 0.24} & \textbf{98.63}{\tiny \ $\pm$ 0.05} & \textbf{94.45}{\tiny \ $\pm$ 0.10} & \textbf{43.15}{\tiny \ $\pm$ 2.29} & \underline{90.81}{\tiny \ $\pm$ 0.75} & \textbf{74.28}{\tiny \ $\pm$ 0.27} \\
        \midrule
        &       &       \multicolumn{3}{c}{Ensemble Distillation Model Setting} & \\
        % \cmidrule(lr){3-5}
        \midrule
        Energy w/Aux & \textbf{3.66}{\tiny \ $\pm$ 0.16} & 98.12{\tiny \ $\pm$ 0.07} & 92.95{\tiny \ $\pm$ 0.10} & 45.54{\tiny \ $\pm$ 2.80} & \underline{90.75}{\tiny \ $\pm$ 0.57} & 72.81{\tiny \ $\pm$ 0.30} \\
        OE & 4.91{\tiny \ $\pm$ 0.27} & 98.47{\tiny \ $\pm$ 0.07} & \underline{94.25}{\tiny \ $\pm$ 0.14} & 42.83{\tiny \ $\pm$ 1.38} & 89.93{\tiny \ $\pm$ 0.67} & \underline{75.13}{\tiny \ $\pm$ 0.30} \\
        OE + MaxLogit & 4.76{\tiny \ $\pm$ 0.28} & \underline{98.52}{\tiny \ $\pm$ 0.07} & 94.25{\tiny \ $\pm$ 0.14} & \underline{42.67}{\tiny \ $\pm$ 1.62} & \textbf{91.21}{\tiny \ $\pm$ 0.54} & 75.13{\tiny \ $\pm$ 0.30} \\
        MVOL (ours) & \underline{3.96}{\tiny \ $\pm$ 0.24} & \textbf{98.53}{\tiny \ $\pm$ 0.07} & \textbf{94.48}{\tiny \ $\pm$ 0.16} & \textbf{42.61}{\tiny \ $\pm$ 2.92} & 88.75{\tiny \ $\pm$ 0.72} & \textbf{76.48}{\tiny \ $\pm$ 0.28} \\
        \midrule
    \end{tabular}
    \label{fine-tune}
    % \end{footnotesize}
\end{table*}
\subsection{Self-distillation as an Implicit Ensemble}
\label{Self-distillation as an Implicit Ensemble}

\citep{allen-zhu2023towards} suggest that self-distillation can function as an implicit ensemble distillation, where soft labels are generated by a single model rather than ensemble models. This approach enables a single model to learn more features than the single model setting. In this section, we evaluate MVOL across ensemble distillation and self-distillation settings. The main results are presented in Table \ref{selfdistill}. 
We observe that MVOL in ensemble knowledge distillation typically outperforms that in self-knowledge distillation setting. 
The reason could be that models with self-distillation can not learn as many as ensemble distillation, which has proved in  \cite{allen-zhu2023towards}. 
For simplicity, in the main submission, we use ensemble distillation to analyze the case where OOD detectors learn more features.

\begin{table*}[htbp]
  \centering
  \caption{Main results on using self-distillation as an implicit ensemble. SelfKD means self-knowledge distillation and EKD means ensemble knowledge distillation.}
%   \begin{footnotesize}
    \begin{tabular}{lllllll}
    \toprule
    \multicolumn{7}{c}{CIFAR-10} \\
    \midrule
    \multirow{2}[0]{*}{Method} & \multicolumn{2}{c}{LSUN} & \multicolumn{2}{c}{LSUN (resize)} & \multicolumn{2}{c}{iSUN} \\
    \cmidrule(lr){2-3}     \cmidrule(lr){4-5}    \cmidrule(lr){6-7}
          & FPR $\downarrow$   & AUROC $\uparrow$ & FPR $\downarrow$   & AUROC $\uparrow$ & FPR $\downarrow$   & AUROC $\uparrow$ \\
    \midrule
    MVOL w/ SelfKD & 0.77{\tiny \ $\pm$ 0.17} & 99.31{\tiny \ $\pm$ 0.08} & 1.82{\tiny \ $\pm$ 0.41} & 99.03{\tiny \ $\pm$ 0.11} & 1.96{\tiny \ $\pm$ 0.44} & 99.05{\tiny \ $\pm$ 0.13} \\
    MVOL w/ EKD & \textbf{0.76}{\tiny \ $\pm$ 0.1} & \textbf{99.45}{\tiny \ $\pm$ 0.15} & \textbf{1.37}{\tiny \ $\pm$ 0.38} & \textbf{99.13}{\tiny \ $\pm$ 0.1} & \textbf{1.70}{\tiny \ $\pm$ 0.5} & \textbf{99.07}{\tiny \ $\pm$ 0.12} \\
    \midrule
    \multirow{2}[0]{*}{Method} & \multicolumn{2}{c}{Textures} & \multicolumn{2}{c}{places365} & \multicolumn{2}{c}{SVHN} \\
    \cmidrule(lr){2-3}     \cmidrule(lr){4-5}    \cmidrule(lr){6-7}
          & FPR $\downarrow$   & AUROC $\uparrow$ & FPR $\downarrow$   & AUROC $\uparrow$ & FPR $\downarrow$  & AUROC $\uparrow$ \\
    \midrule
    MVOL w/ SelfKD & 4.59{\tiny \ $\pm$ 0.37} & 98.10{\tiny \ $\pm$ 0.19} & 10.99{\tiny \ $\pm$ 0.46} & 96.38{\tiny \ $\pm$ 0.15} & \textbf{1.48}{\tiny \ $\pm$ 0.27} & 99.21{\tiny \ $\pm$ 0.07} \\
    MVOL w/ EKD & \textbf{4.34}{\tiny \ $\pm$ 0.21} & \textbf{98.22}{\tiny \ $\pm$ 0.09} & \textbf{10.78}{\tiny \ $\pm$ 0.39} & \textbf{96.43}{\tiny \ $\pm$ 0.18} & 1.57{\tiny \ $\pm$ 0.35} & \textbf{99.24}{\tiny \ $\pm$ 0.14} \\
    \midrule
    \multicolumn{7}{c}{CIFAR-100} \\
    \midrule
    \multirow{2}[0]{*}{Method} & \multicolumn{2}{c}{LSUN} & \multicolumn{2}{c}{LSUN (resize)} & \multicolumn{2}{c}{iSUN} \\
    \cmidrule(lr){2-3}     \cmidrule(lr){4-5}    \cmidrule(lr){6-7}
          & FPR $\downarrow$   & AUROC $\uparrow$ & FPR $\downarrow$  & AUROC $\uparrow$ & FPR $\downarrow$  & AUROC $\uparrow$ \\
    \midrule
    MVOL w/ SelfKD & 15.34{\tiny \ $\pm$ 1.92} & 96.58{\tiny \ $\pm$ 0.41} & 45.23{\tiny \ $\pm$ 10.1} & 85.68{\tiny \ $\pm$ 4.07} & 51.79{\tiny \ $\pm$ 9.95} & 82.82{\tiny \ $\pm$ 4.2} \\
    MVOL w/ EKD & \textbf{14.16}{\tiny \ $\pm$ 1.66} & \textbf{96.84}{\tiny \ $\pm$ 0.37} & \textbf{33.09}{\tiny \ $\pm$ 5.12} & \textbf{91.45}{\tiny \ $\pm$ 1.71} & \textbf{40.50}{\tiny \ $\pm$ 5.0} & \textbf{89.16}{\tiny \ $\pm$ 1.65} \\
    \midrule
    \multirow{2}[0]{*}{Method} & \multicolumn{2}{c}{Textures} & \multicolumn{2}{c}{places365} & \multicolumn{2}{c}{SVHN} \\
    \cmidrule(lr){2-3}     \cmidrule(lr){4-5}    \cmidrule(lr){6-7}
          & FPR $\downarrow$  & AUROC $\uparrow$& FPR $\downarrow$  & AUROC $\uparrow$ & FPR $\downarrow$  & AUROC $\uparrow$\\
    \midrule
    MVOL w/ SelfKD & 50.54{\tiny \ $\pm$ 2.1} & 85.40{\tiny \ $\pm$ 0.74} & 54.34{\tiny \ $\pm$ 0.59} & 83.21{\tiny \ $\pm$ 0.13} & \textbf{29.25}{\tiny \ $\pm$ 9.3} & \textbf{94.24}{\tiny \ $\pm$ 1.55} \\
    MVOL w/ EKD & \textbf{48.31}{\tiny \ $\pm$ 2.28} & \textbf{86.54}{\tiny \ $\pm$ 0.8} & \textbf{54.24}{\tiny \ $\pm$ 0.34} & \textbf{83.53}{\tiny \ $\pm$ 0.2} & 33.40{\tiny \ $\pm$ 3.09} & 93.10{\tiny \ $\pm$ 0.47} \\
    \bottomrule
    \end{tabular}%
    % \end{footnotesize}
  \label{selfdistill}%
\end{table*}%

\end{document}